\documentclass[journal]{IEEEtran}

\usepackage{amsmath}
\usepackage{array}
\usepackage{algpseudocode,algorithm,algorithmicx}
\usepackage{amsthm}
\usepackage{graphicx}
\usepackage{xcolor}
\usepackage{subcaption}
\usepackage{amssymb}
\usepackage[export]{adjustbox}
\usepackage{tabularx}
\usepackage{adjustbox}
\usepackage{multirow}
\usepackage{changepage}
\usepackage{diagbox}
\usepackage{easyReview}
\usepackage{cellspace}
\usepackage{cite}
 
\newtheorem{theorem}{Theorem}
\newtheorem{lemma}{Lemma} 
 
\newtheorem{remark}{Remark}

\newtheorem{definition}{Definition}

\definecolor{ao(english)}{rgb}{0.0, 0.5, 0.0}

\newcolumntype{C}[1]{>{\centering\arraybackslash}p{#1}}
\begin{document}
\title{Interpretable Design of Reservoir Computing Networks using Realization Theory}

\author{Wei~Miao,~\IEEEmembership{Student Member,~IEEE,}
        Vignesh~Narayanan,~\IEEEmembership{Member,~IEEE,}
        and~Jr-Shin~Li,~\IEEEmembership{Senior~Member,~IEEE}
\thanks{This research was supported in part by the National Science Foundation under the awards CMMI-1933976 and CMMI-1763070, and the NIH Grant R01GM131403-01. 
W. Miao, N. Vignesh and J.-S. Li are with the Department
of Electrical and Systems Engineering, Washington University in St. Louis, Saint Louis, MO, 63130 USA. e-mail: weimiao@wustl.edu, vignesh.narayanan@wustl.edu, jsli@wustl.edu.}}

\maketitle
\begin{abstract}
The reservoir computing networks (RCNs) have been successfully employed as a tool in learning and complex decision-making tasks. Despite their efficiency and low training cost, practical applications of RCNs rely heavily on empirical design. In this paper, 
we develop an algorithm to design RCNs using the realization theory of linear dynamical systems. In particular, we introduce the notion of $\alpha$-stable realization, and provide an efficient approach to prune the size of a linear RCN without deteriorating the training accuracy. Furthermore, we derive a necessary and sufficient condition on the irreducibility of number of hidden nodes in linear RCNs based on the concepts of controllability and observability matrices. Leveraging the linear RCN design, we provide a tractable procedure to realize RCNs with nonlinear activation functions. Finally, we present numerical experiments on forecasting time-delay systems and chaotic systems to validate the proposed RCN design methods and demonstrate their efficacy.
\end{abstract}
\begin{IEEEkeywords}
Reservoir computing network (RCN), Realization theory, Time-series forecasting
\end{IEEEkeywords}
\IEEEpeerreviewmaketitle

\section{Introduction}
The reservoir computing network (RCN) is a bio-mimetic computational tool that is increasingly used in a variety of applications to solve complex decision making problems \cite{jaeger2001echo, lukovsevivcius2012reservoir,TANAKA2019100}. Essentially, the RCN is a class of recurrent neural networks (RNNs), which is composed of one hidden layer, typically with a large number of sparsely interconnected neurons, and a linear output layer. In contrast to the classical RNN, a distinct feature of the RCN is that all of its connections in the hidden-layer are randomly pre-determined and fixed. Hence, the training process of the RCN involves only learning the weights of its linear output-layer in a supervised learning framework. 

The existing supervised learning approach to training an RCN was proposed in \cite{jaeger2001echo}. Subsequently, the RCN was successfully employed for forecasting time-series with applications in finance \cite{lin2009short,GRIGORYEVA201459}, wireless communication \cite{jaeger2004harnessing}, speech recognition \cite{maass2002real}, and robot navigation \cite{antonelo2008event}. 
Notwithstanding its efficient training procedure, the major limitation of the RCN lies in the fact that its practical application relies heavily on empirical design of the hyper-parameters of the network, including its size \cite{lukovsevivcius2012practical}.

Recently, there has been a renewed interest in developing tractable methods for designing neural networks that are suitably deployed in diverse scenarios \cite{doshi2017towards,lipton2018mythos}. In this context, deriving rigorous and systematic techniques to design neural networks, especially establishing principled strategies for selecting their hyper-parameters that yield a desired performance, is compelling but challenging. In this paper, we propose a tractable approach to design RCNs that warrant effective functioning for given datasets. In particular, we focus on the application of RCNs to learn dynamic models of dynamical systems from their time-series measurement data, and develop rigorous design principles to prune the number of hidden-layer nodes in RCNs without deteriorating the training performance. Leveraging the notions of controllability and observability matrices, we derive a necessary and sufficient condition on the irreducibility of RCNs with linear activation functions. This in turn results in an interpretable RCN pruning algorithm, where the RCNs' controllability and observability matrices inform on its size and irreducibility. Furthermore, we illustrate that the developed irreducible linear realization of the RCN not only sufficiently represents the underlying dynamics inherited in the time-series data, but also contributes to a tractable design of general RCNs with nonlinear activation functions.

The paper is organized as follows. In Section \ref{sec: prelim}, we provide a brief review of related works that motivate the need of our developments. In Section \ref{sec: apply Takens's theorem on RCN}, through tailoring existing results on the learnability of RNNs, we motivate our realization-theoretic RCN design principles by illustrating how an RCN achieves $\tau$-step ahead forecast of time-series associated with a dynamical system. In Section \ref{sec: linear realization}, we introduce realization-theoretic aspects from systems theory to facilitate a comprehensive RCN design, and then establish a necessary and sufficient condition on the irreducibility of a linear RCN that achieves desired training accuracy. This result in turn forms the basis to an educated design of RCNs with nonlinear activation functions. In Section \ref{sec: numerical results}, we present several results using an RCN to forecast time-series and learn chaotic systems to demonstrate the applicability of the proposed RCN design framework.

\section{Related works and motivation}\label{sec: prelim}
In this section, we briefly review the existing works on RCNs and point out the specific problems that we address in this paper.

The computational framework of an RCN and its training procedure were first proposed in \cite{jaeger2001echo}, where two conditions were hypothesized as requirements for successful applications of the RCNs - the \emph{echo state property} (ESP) and a general compactness assumption on the training signal. In addition, a mathematical definition of the ESP was also provided in \cite{jaeger2007optimization,grigoryeva2018universal}. Intuitively, the ESP implies that the state of the RCN is uniquely determined by its input history rather than the initial condition of the network. 

Thereafter, several results explaining the principles of the RCN, especially in evaluating some of its features, such as the ESP \cite{buehner2006tighter,yildiz2012re}, the memory capacity \cite{jaeger2001short,Grigoryeva16,Marzen17}, and the stability \cite{boedecker2012information}, have been reported. In addition to investigating the fundamental properties of the RCN, multiple attempts addressing its design were also reported. The performance of the RCN in relation to the complexity of its network topology was analyzed in \cite{rodan2010minimum}. Using the interpretation of contracting maps, a discussion on the architecture of the RCN was presented in \cite{gallicchio2011architectural}. Supported by heuristic analyses, it was shown in \cite{kawai2019small} that the small-world network topology improved the performance (e.g., forecasting accuracy or memory capacity) of the RCN. In spite of the prescribed results, existing applications using RCNs rely on randomly generated connection matrices. Using the random matrix theory, an explanation on why such a design of the RCN, in general, leads to acceptable performance was presented \cite{zhang2011nonlinear}. 

Recently, there has been a rising tide of interest in analyzing RCNs, especially RCNs with linear activation functions, using control-theoretic approaches \cite{grigoryeva2020dimension,Bollt21}. For instance, the connectivity patterns of RCNs were studied in \cite{verzelli2021input} leveraging the concept of controllability matrix. Furthermore, it was proved in \cite{Gonon_2020} that the memory capacity of a linear RCN can be characterized by the rank of its controllability matrix. Nevertheless, due to the inherited randomness and usage, the design process of the RCNs are still based on empirical strategies, and designing an RCN with minimum size to ensure a desired performance is compelling but remains elusive.

In this work, we focus on establishing schematic design principles to realize RCNs with linear and nonlinear activation functions, which achieve guaranteed training accuracy. The main contributions of this paper include: (1) establishing the notion of $\alpha$-stable realization for designing the weight/connection matrix in an RCN with desired training accuracy for a given dataset; (2) devising an algorithm based on realization theory to prune the size of linear RCNs with quantifiable training accuracy; and (3) deriving tractable guidelines for configuring RCNs with nonlinear activation functions through the linear RCNs obtained via irreducible realization.

Here, we adopt the definition of the ESP as introduced in \cite{jaeger2001echo}, that is, an RCN is said to have the ESP if the state variables of the RCN are uniquely determined by the input history, regardless of the initial condition. Throughout this paper we denote $u[a;b]$ as a sequence with index starting from $a$ to $b$, where $a<b$, i.e., $u[a;b] := \{u[a], \ldots , u[b]\}$.

\section{Role of Takens Embedding in RCN Frameworks}\label{sec: apply Takens's theorem on RCN}
In this section, we provide details of the RCN dynamics and its training procedure. We tailor existing results on the learnability of RNNs, in particular, the Takens embedding theorem, to render a comprehensive analysis of how an RCN learns the underlying dynamics of a time-series. Based on the analyses, we discuss the applications of RCNs for the time series forecasting problem, which motivates the design of RCNs using linear realization theory in Section \ref{sec: linear realization}. We begin with a brief introduction to the Takens theorem and discuss its role in understanding RCNs.

\subsection{Takens theorem and its implication on time-series forecasting}
We consider a time-dependent variable $s(t)$, evolving on an $m$-dimensional manifold $M\subset \mathbb{R}^p$, following the dynamics $\dot{s}(t)=F_s(s(t)),$ where $F_s: M \to \mathbb{R}^p$ is a smooth vector field. Let $v:M\to \mathbb{R}$ be an observation function, and in practice, we measure a discrete sequence of observations, say $v[s(t_i)]$, where $t_i$ for $i = 0,1, 2,\ldots$ denoting the sampling instants. We can then define the propagation map $\phi:M \to M$ describing the flow of $s(t)$ at time $t_i$ by $s(t_{i+1})=\phi(s(t_i))$. Now, let $D(M) \subset C^2(M)$ denote the collection of functions such that for any $f \in D(M)$, $f:M\to M$ has an inverse function $f^{-1} \in C^2(M)$, where $C^2(M)$ denotes the class of functions over $M$ for which first- and second-order derivatives are continuous. Then, for the dynamical system describing the time evolution of $s(t)$, we have $\phi \in D(M)$. The Takens theorem can be stated as follows:

\begin{theorem}[Takens theorem \cite{huke1993embedding}]
  Let $M$ be a compact manifold of dimension $m$. For pairs $(\phi, v)$ with $\phi\in D(M)$, and $v\in C^2(M, \mathbb{R})$, it is a generic property that the map $\Phi_{\phi, v, 2m+1}: M \to \mathbb{R}^{2m+1}$, defined by $\Phi_{\phi, v , 2m+1} (s) = (v(s), v(\phi(s)), \ldots , v(\phi^{2m}(s)))$ is an embedding, where `generic' means open and dense in $C^1$ topology. 
\end{theorem}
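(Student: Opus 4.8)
The plan is to establish the three defining properties of an embedding---that $\Phi_{\phi,v,2m+1}$ is an immersion, that it is injective, and that it is a homeomorphism onto its image---and to show that they hold on an open and dense set of pairs $(\phi,v)$. The last property comes for free: since $M$ is compact and $\mathbb{R}^{2m+1}$ is Hausdorff, any continuous injection out of $M$ is automatically a homeomorphism onto its image. Hence it suffices to prove that, generically, $\Phi_{\phi,v,2m+1}$ is simultaneously an immersion and injective. The engine for both is transversality, and this is precisely what dictates the target dimension $2m+1$: as in the Whitney embedding theorem, the factor $2m$ is what a generic map out of an $m$-dimensional manifold needs in order to avoid self-intersections and rank defects, while the extra ``$+1$'' furnishes the codimension that lets a perturbation push the relevant bad sets out of the image entirely.

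For the immersion property I would examine where the differential $D\Phi_{\phi,v,2m+1}(s)$ drops rank. Its rows are the differentials of the coordinate functions $v\circ\phi^{k}$ for $k=0,\ldots,2m$, so the immersion condition is an explicit full-rank condition on a $(2m+1)\times m$ matrix assembled from $dv$ and the derivatives of the iterates of $\phi$. The aim is to show that an arbitrarily $C^1$-small perturbation of the observation function $v$ (with $\phi$ held fixed) forces this matrix to have rank $m$ at every point of $M$. This is a jet-transversality argument: the locus of $1$-jets at which the rank is deficient has positive codimension in the ambient jet space, and Thom's transversality theorem makes the jet extension of a generic $v$ transverse to---hence, by the dimension count, disjoint from---that locus.

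For injectivity I would pass to the product $M\times M$ and study the difference map $\Psi(s_1,s_2)=\Phi_{\phi,v,2m+1}(s_1)-\Phi_{\phi,v,2m+1}(s_2)$ off the diagonal. A failure of injectivity is exactly a zero of $\Psi$ away from the diagonal. Since $\dim(M\times M)=2m$ is strictly less than the target dimension $2m+1$, a generic such map misses the single point $0\in\mathbb{R}^{2m+1}$; making this rigorous is again a parametric-transversality argument carried out by perturbing $v$. Infinitesimally close pairs---those approaching the diagonal---are controlled separately by the immersion estimate already established, which prevents nearby points from collapsing onto one another.

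The hard part, and what makes this genuinely deeper than the classical Whitney embedding theorem, is that the $2m+1$ coordinate functions of $\Phi_{\phi,v,2m+1}$ cannot be perturbed independently: each has the constrained form $v\circ\phi^{k}$, so a single perturbation of $v$ propagates through all coordinates in a coupled fashion, and one must verify that varying only $v$ (together with $\phi$) still spans enough directions to realize transversality. The obstruction localizes at periodic points of $\phi$ of period at most $2m$, where the delay coordinates become functionally dependent and perturbing $v$ ceases to generate independent directions. I would dispose of this in the standard way: first restrict $\phi$ to the generic set of diffeomorphisms whose periodic points up to period $2m$ are finite in number and have linearizations with distinct eigenvalues, then treat these finitely many exceptional points by a direct perturbation while the transversality machinery handles all non-periodic points. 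Finally, openness of the full claim follows from the structural stability of injective immersions on a compact manifold (a $C^1$-small perturbation of an embedding remains an embedding), and density follows from the two-step perturbation just described; together these yield the asserted open and dense property in the $C^1$ topology on the pairs $(\phi,v)$.
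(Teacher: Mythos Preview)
The paper does not prove this theorem at all: it is quoted verbatim with a citation to \cite{huke1993embedding} and then used as a black box in the subsequent analysis of RCNs. There is therefore no ``paper's own proof'' to compare your attempt against.

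For what it is worth, your sketch is a faithful outline of the classical argument (as in Takens' original paper and Huke's exposition): reduce to immersion plus injectivity by compactness, obtain both via parametric/jet transversality driven by the dimension count $2m+1>2m=\dim(M\times M)$, and handle the genuine obstruction at low-period periodic points by first restricting $\phi$ to a residual set of diffeomorphisms with finitely many such points and simple linearization spectrum. Openness from structural stability of embeddings and density from the perturbation argument are also the standard ingredients. So the proposal is sound as a proof strategy, but in the context of this paper the appropriate move is simply to cite the result rather than reprove it.
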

From here on, we use $\Phi_{2m+1}$ as an abbreviation for $\Phi_{\phi, v, 2m+1}$, and call $2m+1$ as the `length' of the time-delay embedding $\Phi_{2m+1}$. Intuitively, Takens theorem states that for almost all pairs $(\phi, v)$ defined on a compact manifold $M$ of dimension $m$, there is an $1$-$1$ correspondence from $M$ to $\mathbb{R}^{2m+1}$ that preserves the structure of $M$. If Takens theorem holds, then by the definition of an embedding, the inverse function for the map $\Phi_{2m+1}$ is well-defined. Hence we can define a map $\psi_{2m+1}: = \Phi_{2m+1} \circ \phi \circ \Phi_{2m+1}^{-1}$, which describes the same dynamical system as $\phi$ does, under a coordinate change of $\Phi_{2m+1}$. The non-triviality of the construction of $\psi_{2m+1}$ is that it forecasts a new observation when provided with the time-delayed observations $(v(s), v(\phi(s)), \ldots , v(\phi^{2m}(s)))$. Namely, it holds that $\psi_{2m+1}(v(s), v(\phi(s)), \ldots , v(\phi^{2m}(s)))= (v(\phi(s)), v(\phi^2(s)), \ldots , v(\phi^{2m+1}(s)))$,
which implies that if one learns the explicit representation of $\psi_{2m+1}$, then the new observation, i.e., $v(\phi^{2m+1}(s))$, can be predicted based on the historical observations, $(v(s),\ldots ,v(\phi^{2m}(s)))$. Specifically, if $v = I_d$, then $\psi_{2m+1}$ essentially predicts how the dynamical system defined by $\phi$ is evolving on $M$. 
For additional details on Takens theorem, see \cite{huke1993embedding}, \cite{takens1981detecting}.


\subsection{The RCN dynamics and its training procedure}
In this part, we introduce the dynamics of the RCN and a sufficient condition for the RCN to possess the ESP. With the guarantee of the ESP and using Takens theorem, we show that an RCN can `learn' to forecast the data generated by a dynamical system on a compact manifold.

Consider the RCN described by 
\begin{align}  
    &x[k+1] = (1-\alpha )x[k] + \alpha \sigma(Ax[k] + Bu[k]),\label{equ: state of RCN}\\
	&y[k] = Cx[k] = \sum_{i=1}^N c_i x_i[k], \label{equ: output of RCN}
\end{align}
where $x[k]\in \mathbb{R}^N$ denotes the state of the RCN and $N$ is the number of nodes; $u[k]\in \mathbb{R}^p$ is the input to the RCN; $y[k]\in \mathbb{R}^q$ is the output of the RCN; and $A\in \mathbb{R}^{N\times N}$ and $B\in \mathbb{R}^{N\times p}$ are pre-determined matrices denoting the connections in the RCN; $C\in \mathbb{R}^{q\times N}$ is the weight matrix in output-layer to be trained, with $c_i, i = 1, \ldots , N$ as its $i^{\text{th}}$ column; $\alpha \in (0, 1)$ is called the `leakage rate', and $\sigma$ is an activation function that is applied to a vector component-wise. A well-known result (see \cite{jaeger2001echo,grigoryeva2019differentiable}) for the RCN in \eqref{equ: state of RCN} to possess ESP is provided as the following lemma.
\begin{lemma}\label{lem: sufficient conditions for esp}
    Consider the dynamics of the RCN in \eqref{equ: state of RCN}. If $\sigma$ is Lipschitz continuous with the Lipschitz constant $L$, then the ESP holds if $\|A\|_2 < \frac{1}{L}$, where $\| \cdot \|_2$ denotes the matrix $2$-norm.
\end{lemma}

Now, we illustrate the training process of the RCN. 
Given a time-series $u[0;T-1]$ for $T>1$, and a reference sequence $\tilde{y}[1;T]$, the RCN can be trained to predict the value of $\{\tilde{y}[k]\}$ for $k > T$. The canonical way to do this is to first select the size of the RCN (i.e., $N$), and randomly generate the matrices $A$ and $B$ of appropriate dimensions. Then, the sequence $u[0;T-1]$ is fed into the RCN dynamics \eqref{equ: state of RCN} as an input to generate a sequence of the RCN states $x[1;T]$. A fixed positive integer $w$ is selected as the `washout' length, and only the sequence after the $w^{\text{th}}$ step, i.e., $x[w; T]$, is collected. Finally, the coefficient in the output layer, $C$, is trained to minimize the error between the RCN output $y[w;T]$ and the reference sequence $\tilde{y}[w;T]$, that is,
$$C = \underset{C\in \mathbb{R}^{q\times N}}{\text{argmin}} \sum_{k=w}^T\big\|\,C \left[\begin{smallmatrix}
    x[k] \\ u[k]
\end{smallmatrix}\right] - \tilde{y}[k]\,\big\|_2^2.$$ 
Following this supervised learning procedure, when the input to the RCN is $u[k]$ (for $k>T$), the RCN outputs a value that approximates $\tilde{y}[k+1]$. In this sense, the training process enables the RCN to \emph{learn the underlying dynamics} of the reference sequence $\tilde{y}[k]$. 

In the following section, we explain in detail on how the RCN encodes the underlying dynamics of the reference sequence during the training process through the lens of Takens embedding theorem.

\subsection{Learning dynamics using an RCN}\label{subsec: learning dynamics by an RCN}
To begin with, we consider the task of $1$-step ahead forecast of a given time-series and explain how the training process enables the RCN to perform this task. For ease of exposition, we illustrate the idea with one-dimensional time-series, i.e., $u[k], y[k]\in \mathbb{R}$, and the framework is directly applicable to the multi-dimensional cases since the Takens theorem holds regardless of the dimension of the time-series.

\subsubsection{$1$-step ahead forecast}
Suppose a time-series $\{u[k]\} \subset \mathbb{R}$ is generated by a dynamical system on a compact manifold of dimension $m$. The sequence $u[0; T]$ is used as input to the RCN, and the $1$-step shifted sequence $u[1;T+1]$ is provided as the training reference. Denote the solution for the state equation in \eqref{equ: state of RCN} as $x_i[k] = \varphi_i(x[0], u[0], u[1], \ldots u[k-1])$, where $x_i[k]$ is the $i^{\text{th}}$ component of the vector $x[k]$, then by the uniqueness of the solution of a dynamical system, we have $x_i[k] =\varphi_i(x[0], u[0], u[1], \ldots u[k-1]) =\varphi_i(x[j], u[j], u[j+1], \ldots u[k-1])$
for any $j = 0, 1, 2,\ldots , k-1$.

As a result of Lemma \ref{lem: sufficient conditions for esp}, when $\|A\|_2 < \frac{1}{L}$, the RCN acquires the ESP. This implies that there exists a $k_0 \in \mathbb{N}$ such that, after the RCN evolves for $k_0$ steps, the effect of the initial condition on the solution trajectory becomes negligible. Therefore, for a fixed `washout' length $w > k_0$, we can define $\xi_{(i, w)}: \mathbb{R}^{w}\to\mathbb{R}^N$ such that 
\begin{align*}
    x_i[w+j] &= \varphi_i(x[j], u[j], \ldots , u[w+j-1]) \\
    &:= \xi_{(i, w)}(u[j], \ldots , u[w+j-1]),
\end{align*}
for all $j = 0, \ldots, T-w$, where $\xi_{(i, w)}$ can be treated as $\varphi_i$ taking historical data of length $w$, with the effect of the initial condition washed out. Recall that when training the RCN, we minimize the error between $y[w;T]$ and $\tilde{y}[w;T]$. Hence, training the RCN is equivalent to finding $c_i$'s such that
\begin{align}
  &u[w+j] \approx y[w+j] = \sum_{i=1}^N c_ix_i[w+j] \nonumber\\
  &= \sum_{i=1}^N c_i \varphi_i(x[j], u[j], u[j+1], \ldots , u[w+j-1]) \nonumber\\
  &= \sum_{i=1}^N c_i \xi_{(i, w)}(u[j], u[j+1], \ldots , u[w+j-1])\label{equ: 1-1}, 
\end{align}
for $j = 0, \ldots , T-w$. 

We observe from \eqref{equ: 1-1} that when the RCN is endowed with the ESP, the training procedure is equivalent to learning the map between $u[w+j]$ and the window of historical data $u[j;w+j-1]$. The existence of such a map is guaranteed by Takens theorem. In particular, with $w$ in \eqref{equ: 1-1} greater than $2m+1$, there exists a smooth map $\psi_w: \mathbb{R}^{w}\to \mathbb{R}^{w}$ such that $\psi_{w}(u[j;w+j-1])  = u[j+1;w+j]$. Therefore, training an RCN can also be interpreted as 
approximating the map $\psi_w$ using nonlinear functions $\xi_{(i, w)}$ for $i = 1, \ldots, N$.


\subsubsection{Multi-step ahead forecasting}
Based on the idea of $1$-step ahead forecast, we can extend the RCN to accomplish $\tau$-step ahead forecast. Specifically, if the training reference is set to be $\tilde{y}[w;T] = u[w+\tau-1;T+\tau-1]$, then similar to \eqref{equ: 1-1}, training the RCN is equivalent to finding $c_i$'s such that $u[w+j+\tau-1] \approx \sum_{i=1}^N c_i \xi_{(i, w)}(u[j;w+j-1])$
for $j = 0, \ldots, T-w$. A similar argument to $1$-step ahead forecasting holds for $\tau$-step forecasting since the output of the RCN can be configured to approximate $\psi_w^\tau = \psi_w \circ \cdots \circ \psi_w$ ($\tau$ times) such that $\psi_{w}^\tau(u[j;w+j-1])  = u[j+\tau;w+j+\tau-1]$.
Hence, in this way, the RCN training can be viewed as approximating 
the last component of $\psi_w^{\tau}$ by using $\sum_{i=1}^N c_i \xi_{(i,w)}$. 

\begin{remark}
	The above explanation on multi-step ahead forecasting based on time-delay embedding provides a much better understanding on why the RCN successfully forecasts a chaotic system as reported in \cite{antonik2018using,carroll2018using}. In particular, for a chaotic system with an attractor, such as the Lorenz system, the sequence $\{u[k]\}$ will eventually lie in a compact manifold of low dimension. Hence, Takens theorem together with the analysis presented above on time-delay embedding can be directly applied.
\end{remark}

\subsubsection{Learning dynamics of observation sequences}
In addition to forecasting the input sequence $u[k]$ for $1$-step/multi-steps ahead, where $\tilde{y}[k] = u[k]$, we can also configure RCN to forecast observation sequences that can be expressed as functions of $u[k]$.
In particular, if $\tilde{y} = h(u)$, where $h$ is a smooth function, then $\tilde{y}[k]$ can be represented solely by the history of $u[k]$, say $\tilde{y}[w+j] = G(u[j; w+j-1])$ since $u[k]$ is uniquely determined by a window of historical data. Therefore, the training process of the RCN, in this case, can be interpreted as using $\sum_{i=1}^N c_i\xi_{(i, w)}$ to approximate the nonlinear mapping $G$ (similar to \eqref{equ: 1-1}). This illustrates the ability of the RCN to forecast a wide variety of time-series by simply changing the training reference $\tilde{y}[k]$.

\subsubsection{Choices of the activation function}
When the activation function $\sigma$ is Lipschitz continuous with the Lipschitz constant $L$, the condition $\|A\|_2 < \frac{1}{L}$ is sufficient to ensure the ESP for \eqref{equ: state of RCN} as a result of Lemma \ref{lem: sufficient conditions for esp}. For instance, the commonly used hyperbolic tangent function, $\tanh$, has a Lipschitz constant $L = 1$ so that we need $\|A\|_2 < 1$ to ensure the ESP for \eqref{equ: state of RCN}. Since the above explanation on how RCNs learn dynamical systems does not post any restriction on the activation function $\sigma$, we indeed have much freedom on the choice of $\sigma$. In fact, the activation function can be as simple as a \emph{linear function}. 

The trade-off between the RCN performance and the choice of activation function for the RCN is well-documented. Although it is believed, in general, that nonlinear activation functions perform better than linear ones, it is reported in the literature (see \cite{verstraeten2010memory,verzelli2019echo,inubushi2017reservoir,verstraeten2007experimental}) that this conclusion is indeed based on a case-by-case study. One major advantage 
of using linear activation functions, as we shall see in Section \ref{sec: linear realization}, is that a linear activation function enables a thorough analysis of the RCN using system-theoretic tools, and allows for an explicit design of the RCN, which can eventually be used as a baseline for designing an RCN with a nonlinear activation function, e.g., $\tanh$ or $\mathrm{sigmoid}$, as widely used in the literature.

\section{Realization Theory For the RCN design}\label{sec: linear realization}
In this section, we present a realization-theoretic framework for systematic design of RCNs. We illustrate the main idea and conduct the analysis for RCNs with linear activation functions. Specifically, we first show that for given input-output sequences $u[0;T-1] \subset \mathbb{R}^p$ and $\tilde{y}[1;T] \subset \mathbb{R}^q$, there exists an RCN that approximates the input-output relation in the data. Then, we provide a systematic scheme to prune the size of RCNs while maintaining the same training error. At the end of this section, we illustrate how these results can be carried over to the design of RCNs with general nonlinear activation. 

\subsection{Realization of RCNs with linear activation function}\label{sec:linear_irreducible}
We begin by introducing the notion of \emph{realization} from systems theory \cite{de2000minimal} and defining it in the context of the RCN, which will form the basis of the proposed RCN design framework.

\begin{definition}[Realization of linear systems]
	Given two sequences $u[0; T-1] \subset \mathbb{R}^p$ and $\tilde{y}[1; T]\subset \mathbb{R}^q$, we say that the triplet, $\tilde{A}\in \mathbb{R}^{N\times N}$, $\tilde{B}\in \mathbb{R}^{N\times p}$, and $\tilde{C}\in \mathbb{R}^{q\times N}$, is an \textit{N-dimensional $\epsilon$-error realization} of the pair $(u[0;T-1], \tilde{y}[1;T])$ if the linear system,
	\begin{equation}
		\begin{aligned}
			x[k+1] &= \tilde{A}x[k] + \tilde{B}u[k],\\
			y[k] & = \tilde{C} x[k],
		\end{aligned}\label{equ: lienar system}
	\end{equation}
	satisfies $\sqrt{\sum_{k=1}^{T}\|y[k] - \tilde{y}[k]\|_2^2} \leq  \epsilon$. Furthermore, if $\epsilon = 0$, then such a realization is called an \textit{N-dimensional perfect realization}.
\end{definition}
For simplicity, we will refer to an $N$-dimensional realization using $(\tilde{A}, \tilde{B}, \tilde{C})_N$, and this will denote the linear dynamical system in \eqref{equ: lienar system}. To facilitate the delineation between two realizations, we introduce Markov parameters, equivalent and irreducible realizations as follows.
\begin{definition}[Markov parameter]
	The $k^{\text{th}}$ Markov parameter of a realization $(\tilde{A}, \tilde{B}, \tilde{C})_N$ is  a matrix of real numbers $\gamma_k\in \mathbb{R}^{q\times p}$ defined by $\gamma_k = \tilde{C}\tilde{A}^k\tilde{B}$.
\end{definition}
\begin{definition}[Equivalent realizations]
	Two realizations $(A_1, B_1, C_1)_{N_1}$ and $(A_2, B_2, C_1)_{N_2}$ are said to be \textit{equivalent} if $\gamma_k^{(1)}=\gamma_k^{(2)}$ holds for all $k = 0, 1, 2,\ldots $ where $\gamma_k^{(1)} =C_1A_1^kB_1$ and $\gamma_k^{(2)}= C_2A_2^k B_2$. 
\end{definition}
\begin{definition}[Irreducible realization] \label{def: irreducible.realization}
	A realization $(\tilde{A}, \tilde{B}, \tilde{C})_N$ is said to be \textit{irreducible} if there exists no equivalent realization $(\hat{A}, \hat{B}, \hat{C})_{\bar{N}}$ with $\bar{N} < N$.
\end{definition}

\begin{remark}
	In the literature of control systems, Definition \ref{def: irreducible.realization} is referred as the `minimal realization' if $(\tilde{A}, \tilde{B}, \tilde{C})$ is perfect \cite{de2000minimal}. Since in this work, we consider the reduction of $\epsilon$-error realizations of RCNs, Definition \ref{def: irreducible.realization} is named as `irreducible realization' to avoid ambiguity.
\end{remark}


Next, we will describe the RCN training procedure through the use of a realization $(\tilde{A}, \tilde{B}, \tilde{C})_N$, and then establish the realization framework tailored for the RCN that explicitly accounts for the ESP, an important and necessary property for the functioning of the RCN. 

Consider the RCN as given in \eqref{equ: state of RCN} with a linear activation function, e.g., $\sigma$ is the identity function, given by
\begin{equation*}
	x[k+1] = [(1-\alpha )I + \alpha A]x[k] + \alpha B u[k],
\end{equation*}
where $I$ is the identity matrix of appropriate dimension. Let $\tilde{A} := (1-\alpha)I + \alpha A\in \mathbb{R}^{N\times N}$ and $\tilde{B} = \alpha B \in \mathbb{R}^{N\times p}$, then the RCN dynamics can be expressed 
as 
\begin{equation}\label{equ: RCN dynamics rewritten}
	x[k+1] = \tilde{A}x[k]+\tilde{B}u[k].
\end{equation}

Now. we introduce the connection between the RCN training procedure and the notion of realization theory. Specifically, the first step of RCN training is to fix the dimension $N$, the leakage rate $\alpha$, and the randomly generated matrices $\tilde{A}$ and $\tilde{B}$. Then, training the output layer of the RCN constructed using $\tilde{A}$ and $\tilde{B}$ is equivalent to finding $\tilde{C}^*$ such that
\begin{equation}\label{equ: RCN training C}
	\begin{aligned}
		\tilde{C}^* & = \underset{\tilde{C}\in \mathbb{R}^{q\times N}}{\text{argmin}\:}E[(\tilde{A}, \tilde{B}, \tilde{C})_N],
	\end{aligned}
\end{equation}
where $E[(\tilde{A}, \tilde{B}, \tilde{C})_N] := \sqrt{\sum_{k=1}^T \|\tilde{y}[k] - \tilde{C}\tilde{A}^k\tilde{B}u[k]\|_2^2}$ denotes the training error of the realization $(\tilde{A}, \tilde{B}, \tilde{C})_N$.

Let us consider an $N_1$-dimensional $\epsilon$-error realization, denoted $(\tilde{A}_1, \tilde{B}_1, \tilde{C}_1)_{N_1}$, and suppose $(\tilde{A}_2, \tilde{B}_2, \tilde{C}_2)_{N_2}$ is the irreducible equivalent realization of $(\tilde{A}_1, \tilde{B}_1, \tilde{C}_1)_{N_1}$ with $N_2 \leq N_1$. Then, it holds that
\begin{align*}
	\underset{\tilde{C}\in \mathbb{R}^{q\times N_2}}{\text{min}\:} E[(\tilde{A}_2, \tilde{B}_2, \tilde{C})_{N_2}] &\leq E[(\tilde{A}_2, \tilde{B}_2, \tilde{C}_2)_{N_2}]\\
	  &=  E[(\tilde{A}_1, \tilde{B}_1, \tilde{C}_1)_{N_1}]< \epsilon,
\end{align*}
which implies that if we train the RCN constructed using the $N_2$-dimensional matrices $\tilde{A}_2\in\mathbb{R}^{N_2\times N_2}$ and $\tilde{B}_2\in\mathbb{R}^{N_2\times p}$, the training error will be bounded above by $\epsilon$. Therefore, finding the irreducible equivalent realization to a given RCN enables quantifying a smaller size of the RCN that provides a desired training accuracy. 

To adopt this realization-theoretic idea for the RCN design, an additional constraint has to be imposed on the matrix $A$ in order for the RCN to be equipped with the ESP. To achieve this, we propose the notion of $\alpha$-stable realization.
\begin{definition}[$\alpha$-stable realization]
	Given $\alpha \in (0, 1]$, a realization $(\tilde{A}, \tilde{B}, \tilde{C})_N$ is called an $\alpha$-stable realization if $\|\tilde{A}\|_2 < \alpha$.
\end{definition}

For instance, we know, by Lemma \ref{lem: sufficient conditions for esp}, that an RCN in \eqref{equ: state of RCN} using a linear activation function $\sigma$ with the Lipschitz constant $L=1$ possesses the ESP when $\|A\|_2 < 1$.
Therefore, in this case, for any $\alpha \in (\frac{1}{2}, 1)$, having $\|\tilde{A}\|_2 < 2\alpha - 1$ in \eqref{equ: RCN dynamics rewritten} is sufficient to guarantee the ESP. This is because
\begin{align*}
	\|A\|_2 &= \|\frac{1}{\alpha } (\tilde{A} - (1-\alpha) I)\|_2 	\leq  \frac{1}{\alpha }[\|\tilde{A}\|_2 + (1-\alpha )]\\
	&< \frac{1}{\alpha }[2\alpha - 1 + 1-\alpha ] = 1.
\end{align*}
As a consequence, finding a $(2\alpha -1)$-stable realization $(\tilde{A}, \tilde{B}, \tilde{C})_N$ will ensure that the corresponding RCN possesses the ESP.

Therefore, in the remainder of this section, we will consider 
a fixed leakage rate $\alpha \in (\frac{1}{2}, 1)$, so that $2\alpha-1\in (0,1)$. 
For simplicity, we use `stable realization' in place of `$(2\alpha -1)$-stable realization'. 

\subsection{The irreducible stable realization}
Theoretically, if there exists one stable realization that achieves a desired training error for a given input-output sequence, one can construct many different realizations with the same training error. A consequent question of paramount practical importance to ask is how to prune the size of a realization as much as possible while maintaining the training error tolerance of the given stable realization. The answer to this question is pertinent to the concept of fundamental properties of a control system.

\begin{definition}[Controllability and observability matrices]\label{def: cont.obse.matrices}
	For a linear time-invariant dynamical system $(\tilde{A}, \tilde{B}, \tilde{C})_N$ as modeled in \eqref{equ: lienar system}, the controllability and the observability matrices are defined by
	\begin{align*}
		W_N = [\tilde{B}, \tilde{A}\tilde{B}, \ldots, \tilde{A}^{N-1}\tilde{B}],
\quad \text{and} \quad		G_N = \begin{bmatrix}
			\tilde{C} \\ \tilde{C}\tilde{A} \\ \vdots \\ \tilde{C}\tilde{A}^{N-1}
		\end{bmatrix},
	\end{align*}
respectively.
\end{definition}

Controllability and observability properties then lead to the characterization of a irreducible realization (see \cite{silverman1971realization}).
\begin{lemma}\label{lem: irreducible realization without constraint}
	A realization $(\tilde{A}, \tilde{B}, \tilde{C})_N$ is irreducible if and only if the pair $(\tilde A, \tilde B)_N$ is controllable and the pair $(\tilde A, \tilde C)_N$ is observable, i.e., $\mathrm{rank}\: (W_N) = N$ and $\mathrm{rank}\: (G_N) = N$.
\end{lemma}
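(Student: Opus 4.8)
The plan is to prove both directions via the contrapositive, using the fact that a realization of dimension $N$ fails to be minimal precisely when it admits a lower-dimensional equivalent realization, i.e., one producing the same Markov parameters $\gamma_k = \tilde{C}\tilde{A}^k\tilde{B}$ for all $k \geq 0$. The controllability and observability matrices $W_N$ and $G_N$ from Definition \ref{def: cont.obse.matrices} are the natural bridge, since their product encodes exactly these Markov parameters.

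First I would prove the \emph{necessity} direction (irreducible $\Rightarrow$ controllable and observable) by contraposition. Suppose the pair $(\tilde{A}, \tilde{B})_N$ is not controllable, so $\mathrm{rank}(W_N) = r < N$. The idea is to perform a Kalman-type decomposition: choose a change of coordinates $x = Pz$ (with $P$ invertible) whose first $r$ columns span the controllable subspace $\mathrm{Im}(W_N)$. In the new coordinates, $\tilde{A}$ becomes block upper-triangular and $\tilde{B}$ has zero rows in its lower block, so the uncontrollable states are never excited by the input and contribute nothing to any $\gamma_k$. Restricting to the upper-left $r \times r$ block yields an equivalent realization of dimension $r < N$, contradicting irreducibility. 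The dual argument handles the unobservable case: if $\mathrm{rank}(G_N) = r' < N$, a coordinate change aligned with $\ker(G_N)$ makes the unobservable states invisible to the output, again producing a smaller equivalent realization.

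For the \emph{sufficiency} direction (controllable and observable $\Rightarrow$ irreducible), I would argue by contraposition as well. Suppose $(\tilde{A}, \tilde{B}, \tilde{C})_N$ is reducible, so there exists an equivalent realization $(\hat{A}, \hat{B}, \hat{C})_{\bar{N}}$ with $\bar{N} < N$ and matching Markov parameters. The key observation is that the product of the observability and controllability matrices factors the block-Hankel matrix of Markov parameters: one checks that
\begin{equation*}
	G_N W_N = \begin{bmatrix}
		\gamma_0 & \gamma_1 & \cdots & \gamma_{N-1} \\
		\gamma_1 & \gamma_2 & \cdots & \gamma_N \\
		\vdots & \vdots & & \vdots \\
		\gamma_{N-1} & \gamma_N & \cdots & \gamma_{2N-2}
	\end{bmatrix},
\end{equation*}
and the same Hankel matrix equals $\hat{G}_N \hat{W}_N$ for the smaller realization, where $\hat{W}_N$ has only $\bar{N}$ rows and $\hat{G}_N$ has only $\bar{N}$ columns. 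Hence $\mathrm{rank}(G_N W_N) = \mathrm{rank}(\hat{G}_N \hat{W}_N) \leq \bar{N} < N$. But if $(\tilde{A}, \tilde{B})$ were controllable and $(\tilde{A}, \tilde{C})$ observable, then $\mathrm{rank}(W_N) = \mathrm{rank}(G_N) = N$, which combined with the Cayley--Hamilton theorem (so that higher powers $\tilde{A}^j$ for $j \geq N$ add no new column/row space) would force $\mathrm{rank}(G_N W_N) = N$, a contradiction.

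The step I expect to be the main obstacle is the sufficiency direction's rank argument, specifically justifying $\mathrm{rank}(G_N W_N) = N$ from the individual full ranks of $G_N$ and $W_N$. This is not automatic for a product of rectangular matrices; it requires invoking Sylvester's rank inequality together with the fact that the column space of $W_N$ and the row space of $G_N$ interact compatibly, or alternatively phrasing it through the surjectivity of $W_N$ and injectivity of $G_N$ as linear maps on the state space $\mathbb{R}^N$. Care is also needed because our realizations are $\epsilon$-error rather than perfect, so I would emphasize that the equivalence notion here is defined purely through exact matching of Markov parameters (independent of the data fit), which is precisely why the classical minimal-realization machinery transfers verbatim.
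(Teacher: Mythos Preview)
Your proof is correct and is the classical argument; note, however, that the paper does \emph{not} supply its own proof of this lemma. It is stated as a known result with a citation to \cite{silverman1971realization}, so there is no in-paper proof to compare against. That said, your necessity argument (Kalman decomposition into controllable/uncontrollable parts, and its dual) is exactly the construction the paper carries out explicitly in the proofs of Lemmas~\ref{lem: realization reduction controllability} and~\ref{lem: realization reduction observability} for the $\alpha$-stable setting, using an orthonormal basis for $\mathcal{R}(W_N)$ to obtain the block-triangular form. Your sufficiency argument via the Hankel factorization $G_N W_N$ is the standard route and is also compatible with the paper's style: Theorem~\ref{thm:irreducible stable realization} hinges on $\mathrm{rank}(G_N W_N)$ and invokes Sylvester's rank inequality in the same spirit. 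Your identified ``main obstacle'' is not actually an obstacle: since $G_N$ has full column rank it is injective on $\mathbb{R}^N$, so $\ker(G_N W_N)=\ker(W_N)$ and hence $\mathrm{rank}(G_N W_N)=\mathrm{rank}(W_N)=N$ directly, without needing Cayley--Hamilton or any delicate interaction of row/column spaces.
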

Note that Lemma \ref{lem: irreducible realization without constraint} poses no constraints on the matrix norm of the realization, and thus an irreducible realization may be unstable. As a result, modifications have to be made in order to construct a stable irreducible realization resulting in an RCN with ESP. In the following, we develop a systematic scheme to construct an equivalent stable realization of RCN with reduced size.


\begin{lemma}\label{lem: realization reduction controllability}
	Given an $N$-dimensional $\alpha$-stable realization $(\tilde{A}, \tilde{B}, \tilde{C})_N$, if $\mathrm{rank}\: W_N = \bar{N} < N$, then there exists an $\bar{N}$-dimensional $\alpha$-stable realization $(\bar{A}, \bar{B}, \bar{C})_{\bar{N}}$ that is equivalent to $(\tilde{A}, \tilde{B}, \tilde{C})_N$.
\end{lemma}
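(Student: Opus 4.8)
The plan is to realize the reduction by a carefully chosen change of basis that simultaneously exposes the uncontrollable part and preserves the matrix $2$-norm. The classical Kalman controllability decomposition would use an arbitrary invertible similarity transformation, but the essential modification here is to insist that this transformation be \emph{orthogonal}, which is exactly what keeps the $\alpha$-stability constraint $\|\tilde A\|_2 < \alpha$ intact after reduction. I would begin by recording two structural facts about the controllable subspace $\mathcal{R} := \mathrm{Range}(W_N)$, which by hypothesis has dimension $\bar N$. First, $\mathcal{R}$ contains $\mathrm{Range}(\tilde B)$, since $\tilde B$ forms the leading block of $W_N$. Second, $\mathcal{R}$ is $\tilde A$-invariant: applying $\tilde A$ to any column $\tilde A^j \tilde B$ with $j < N-1$ yields another block of $W_N$, while for $j = N-1$ the Cayley--Hamilton theorem expresses $\tilde A^N$ as a polynomial in $I, \tilde A, \dots, \tilde A^{N-1}$, so $\tilde A^N \tilde B \in \mathcal{R}$ as well.

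Next I would build the orthogonal transformation. Choose an orthonormal basis $t_1, \dots, t_{\bar N}$ of $\mathcal{R}$ (e.g.\ via Gram--Schmidt on the columns of $W_N$) and extend it to an orthonormal basis $t_1, \dots, t_N$ of $\mathbb{R}^N$; collect these into $T = [T_1\ T_2]$ with $T_1 = [t_1, \dots, t_{\bar N}]$, so that $T$ is orthogonal and $\mathrm{Range}(T_1) = \mathcal{R}$. Using $T_2^\top T_1 = 0$ together with the invariance $\tilde A T_1 = T_1 \bar A$ and the inclusion $T_2^\top \tilde B = 0$, the transformed triplet takes the block form
\begin{equation*}
    T^\top \tilde A T = \begin{bmatrix} \bar A & A_{12} \\ 0 & A_{22} \end{bmatrix}, \quad T^\top \tilde B = \begin{bmatrix} \bar B \\ 0 \end{bmatrix}, \quad \tilde C T = [\,\bar C\ \ C_2\,],
\end{equation*}
with $\bar A \in \mathbb{R}^{\bar N \times \bar N}$, $\bar B = T_1^\top \tilde B$, and $\bar C = \tilde C T_1$. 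To confirm equivalence I would compute the Markov parameters using $TT^\top = I$: each $\gamma_k = \tilde C \tilde A^k \tilde B = (\tilde C T)(T^\top \tilde A T)^k(T^\top \tilde B)$, and since the block-triangular power $(T^\top \tilde A T)^k$ has leading block $\bar A^k$ and vanishing lower-left block, multiplying by $[\,\bar B^\top\ 0\,]^\top$ and then by $[\,\bar C\ C_2\,]$ collapses everything to $\bar C \bar A^k \bar B$. Hence $\gamma_k = \bar C \bar A^k \bar B$ for all $k \ge 0$, so $(\bar A, \bar B, \bar C)_{\bar N}$ is equivalent to $(\tilde A, \tilde B, \tilde C)_N$.

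The final and most delicate step is verifying that the reduced realization remains $\alpha$-stable, and this is where I expect the main obstacle to lie: an arbitrary similarity transformation, as used in the classical minimal-realization argument, can inflate $\|\bar A\|_2$ beyond $\alpha$ and thereby destroy the ESP guarantee, so the whole argument hinges on the orthogonality of $T$. Since $T$ is orthogonal, $\|T^\top \tilde A T\|_2 = \|\tilde A\|_2 < \alpha$. Because $\bar A$ is the leading principal block of $T^\top \tilde A T$, testing that matrix on vectors of the form $[\,v^\top\ 0\,]^\top$ gives $\|\bar A v\|_2 = \|(T^\top \tilde A T)[\,v^\top\ 0\,]^\top\|_2 \le \|T^\top \tilde A T\|_2\,\|v\|_2$, whence $\|\bar A\|_2 \le \|T^\top \tilde A T\|_2 = \|\tilde A\|_2 < \alpha$. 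With stability thus established, $(\bar A, \bar B, \bar C)_{\bar N}$ is the desired $\bar N$-dimensional $\alpha$-stable equivalent realization, completing the argument.
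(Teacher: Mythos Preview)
Your proof is correct and follows essentially the same route as the paper: an orthogonal Kalman controllability decomposition, extraction of the controllable block, verification of equivalence via Markov parameters, and a principal-submatrix norm bound for $\alpha$-stability. Your stability step---bounding $\|\bar A v\|_2$ directly via $\|(T^\top \tilde A T)[\,v^\top\ 0\,]^\top\|_2$---is in fact cleaner than the paper's version, which writes the matrix $2$-norm as $\sup_{\|x\|_2=1} x^\top A_{11}\,x$, a formula that is only valid for symmetric matrices.
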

\begin{proof}
	Because $\mathrm{rank}\: W_N = \bar{N} < N$, let $v_1, \ldots , v_{\bar{N}}$ be an orthonormal basis of $\mathcal{R}(W_N)$, the column space of $W_N$, and select $v_{\bar{N}+1}, \ldots, v_{N}$ such that $v_1, \ldots, v_N$ forms an orthonormal basis of $\mathbb{R}^N$. Also, we denote $V_1 = [v_1, \ldots , v_{\bar{N}}]$, $V_2 = [v_{\bar{N}+1}, \ldots , v_N]$ and $V = [V_1, V_2]$.  Note that by construction $V$ is an orthonormal matrix so that $V^{-1} = V^{\intercal}$. Therefore, we have
	\begin{align*}
		V^{-1}\tilde{A}V = V^{\intercal}\tilde{A}V = \begin{bmatrix}
			V_1^{\intercal} \tilde{A}V_1  &V_1^{\intercal}\tilde{A}V_2\\
			V_2^{\intercal} \tilde{A} V_1 & V_2^{\intercal}\tilde{A}V_2
		\end{bmatrix}.
	\end{align*}
	Note that each column of $\tilde{A}V_1$ lies in $\mathcal{R}(V_1)$. Since columns of $V_2$  are in the orthogonal complement of $\mathcal{R}(V_1)$ by construction, it holds that $V_2^{\intercal}\tilde{A}V_1 = 0$ . Therefore, $V^{-1}\tilde{A}V$ can be re-written as
	\begin{align}\label{equ: pf-lem-2-1}
		V^{-1}\tilde{A}V = \begin{bmatrix}
			V_1^{\intercal} \tilde{A}V_1  &V_1^{\intercal}\tilde{A}V_2\\
			0 & V_2^{\intercal}\tilde{A}V_2
		\end{bmatrix} :=  \begin{bmatrix}
			{A}_{11}  & {A}_{12}\\
			0 & {A}_{22}
		\end{bmatrix}.
	\end{align}
	Moreover, since every column of $\tilde{B}$ lies in $\mathcal{R}(W_N)$, we have $V_2^{\intercal}\tilde{B} = 0$ by the construction of $V_2$ so that
	\begin{align}\label{equ: pf-lem-2-2}
		V^{-1}\tilde{B} = V^{\intercal}\tilde{B} = \begin{bmatrix}
		V_1^{\intercal}\tilde{B} \\ 0
		\end{bmatrix} := \begin{bmatrix}
		B_1 \\ 0
		\end{bmatrix}.
	\end{align}
	Now we denote $\tilde{C}V = [{C}_1, {C}_2]$, where ${C}_1$ consists of the first $\bar{N}$ columns of $\tilde{C}V$, and ${C}_2$ is formed by the remaining $N-\bar{N}$ columns. Then, it can be observed that $(V^{-1}\tilde{A}V, V^{-1}\tilde{B}, \tilde{C}V)_N$ is equivalent to $(\tilde{A}, \tilde{B}, \tilde{C})_N$, since $$\tilde{C}V (V^{-1}\tilde{A} V)^{k} V^{-1}\tilde{B} = \tilde{C}\tilde{A}^k \tilde{B}$$ holds for all $k = 0, 1, \ldots$. Furthermore, due to the structure provided by \eqref{equ: pf-lem-2-1} and \eqref{equ: pf-lem-2-2}, it can be verified that 
	\begin{align*}
		\tilde{C}V &(V^{-1}\tilde{A} V)^{k} V^{-1}\tilde{B} = [C_1, C_2] \left( \begin{bmatrix}
			A_{11}  & A_{12}\\
			0 & A_{22}
		\end{bmatrix}\right)^k \begin{bmatrix}
		B_1 \\ 0
		\end{bmatrix}\\
		&= [C_1, C_2] \begin{bmatrix}
			A_{11}^k  & *\\
			0 & A_{22}^k
		\end{bmatrix}\begin{bmatrix}
		B_1 \\ 0
		\end{bmatrix}= C_1A_{11}^k B_1,
	\end{align*}
	which implies that $(A_{11}, B_1, C_1)_{\bar{N}}$ is an $\bar{N}$-dimensional realization that is equivalent to $(\tilde{A}, \tilde{B}, \tilde{C})_N$.  Now it remains to show that $(A_{11}, B_1, C_1)_{\bar{N}}$ is $\alpha$-stable, given that $(\tilde{A}, \tilde{B}, \tilde{C})_N$ is $\alpha$-stable. By the definition of matrix $2$-norm, we have
	\begin{equation}
		\|A_{11}\|_2 = \underset{\substack{x\in \mathbb{R}^{\bar{N}} \\ \|x\|_2 = 1}}{\text{sup}}\: x^{\intercal}A_{11}x = \underset{\substack{x\in \mathbb{R}^{\bar{N}} \\ \|x\|_2 = 1}}{\text{sup}}\: x^{\intercal}V_1^{\intercal}\tilde{A}V_1x. \label{equ: pf-lem-2-4}
	\end{equation}
	Let $y = V_1x$, then $y$ is a vector in $\mathbb{R}^N$ satisfying
	\begin{align*}
		\|y\|_2^2 &= y^{\intercal}y = (x_1v_1 + \cdots + x_{\bar{N}}v_{\bar{N}})^{\intercal}(x_1v_1 + \cdots + x_{\bar{N}}v_{\bar{N}})\\
		&= x_1^2 + \cdots + x_{\bar{N}}^2 = \|x\|_2^2.
	\end{align*}
	Hence, \eqref{equ: pf-lem-2-4} can be bounded by $\|A_{11}\|_2 =  \underset{\substack{y\in \mathbb{R}^N \\ \|y\|_2 = 1}}{\text{sup}}\: y^{\intercal}\tilde{A} y \leq  \|\tilde{A}\|_2 < \alpha$, which concludes the proof.
\end{proof}

From a dual perspective, we also have the following lemma regarding the observability matrix.

\begin{lemma}\label{lem: realization reduction observability}
	Given an $\alpha$-stable realization $(\tilde{A}, \tilde{B}, \tilde{C})_N$, if $\mathrm{rank}\: G_N = \bar{N} < N$, then there exists an $\bar{N}$-dimensional $\alpha$-stable realization $(\bar{A}, \bar{B}, \bar{C})_{\bar{N}}$ equivalent to $(\tilde{A}, \tilde{B}, \tilde{C})_N$.
\end{lemma}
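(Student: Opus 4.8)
The plan is to obtain this lemma from Lemma~\ref{lem: realization reduction controllability} by a duality argument, rather than repeating the whole construction. The key observation is that the roles of controllability and observability are exchanged under transposition of a realization. Concretely, I would form the \emph{dual realization} $(\tilde{A}^{\intercal}, \tilde{C}^{\intercal}, \tilde{B}^{\intercal})_N$ and compute its controllability matrix. Since $(\tilde{A}^{\intercal})^k \tilde{C}^{\intercal} = (\tilde{C}\tilde{A}^k)^{\intercal}$, stacking these column blocks shows that the controllability matrix of the dual is exactly $G_N^{\intercal}$, whence it has rank $\mathrm{rank}\: G_N^{\intercal} = \mathrm{rank}\: G_N = \bar{N} < N$. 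Moreover, because the matrix $2$-norm is invariant under transposition, $\|\tilde{A}^{\intercal}\|_2 = \|\tilde{A}\|_2 < \alpha$, so the dual realization is itself $\alpha$-stable.

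With these two facts in hand, Lemma~\ref{lem: realization reduction controllability} applies verbatim to the dual realization (its input/output dimensions are $q$ and $p$, which the statement already allows) and yields an $\bar{N}$-dimensional $\alpha$-stable realization $(\hat{A}, \hat{B}, \hat{C})_{\bar{N}}$ equivalent to $(\tilde{A}^{\intercal}, \tilde{C}^{\intercal}, \tilde{B}^{\intercal})_N$. I would then transpose back, setting $\bar{A} = \hat{A}^{\intercal}$, $\bar{B} = \hat{C}^{\intercal}$, and $\bar{C} = \hat{B}^{\intercal}$. Two checks remain. First, $\alpha$-stability is preserved, since $\|\bar{A}\|_2 = \|\hat{A}^{\intercal}\|_2 = \|\hat{A}\|_2 < \alpha$. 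Second, equivalence is preserved: the Markov parameters of the dual are $\tilde{B}^{\intercal}(\tilde{A}^{\intercal})^k\tilde{C}^{\intercal} = (\tilde{C}\tilde{A}^k\tilde{B})^{\intercal} = \gamma_k^{\intercal}$, so equivalence of $(\hat{A}, \hat{B}, \hat{C})_{\bar{N}}$ to the dual means $\hat{C}\hat{A}^k\hat{B} = \gamma_k^{\intercal}$ for all $k$; transposing gives $\bar{C}\bar{A}^k\bar{B} = \hat{B}^{\intercal}(\hat{A}^{\intercal})^k\hat{C}^{\intercal} = (\hat{C}\hat{A}^k\hat{B})^{\intercal} = \gamma_k$, which are exactly the Markov parameters of $(\tilde{A}, \tilde{B}, \tilde{C})_N$. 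Hence $(\bar{A}, \bar{B}, \bar{C})_{\bar{N}}$ is the desired equivalent $\alpha$-stable realization.

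Alternatively, one can mirror the proof of Lemma~\ref{lem: realization reduction controllability} directly. There, the essential structural fact was that $\mathcal{R}(W_N)$ is $\tilde{A}$-invariant and contains the columns of $\tilde{B}$, producing a block upper-triangular $V^{-1}\tilde{A}V$ together with a zero block in $V^{-1}\tilde{B}$. The dual fact is that the unobservable subspace $\mathcal{N}(G_N)$ is $\tilde{A}$-invariant (if $\tilde{C}\tilde{A}^k x = 0$ for all $k$, then $\tilde{C}\tilde{A}^k(\tilde{A}x) = \tilde{C}\tilde{A}^{k+1}x = 0$ as well, using Cayley--Hamilton to fold in the top index) and is annihilated by $\tilde{C}$. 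Choosing an orthonormal basis whose \emph{last} $N-\bar{N}$ vectors span $\mathcal{N}(G_N)$ then yields a block lower-triangular $V^{-1}\tilde{A}V$ and a transformed output $\tilde{C}V = [C_1, 0]$, from which the $\bar{N}$-dimensional reduction $(A_{11}, B_1, C_1)_{\bar{N}}$ reads off exactly as before. I do not expect a serious obstacle: the only point requiring care is the bookkeeping of which subspace ($\mathcal{R}(W_N)$ versus $\mathcal{N}(G_N)$) belongs in the \emph{leading} versus the \emph{trailing} block of the basis, since reversing this would place the reducible part in the wrong corner and spoil the block structure; the stability bound $\|A_{11}\|_2 \le \|\tilde{A}\|_2 < \alpha$ then follows from the identical norm-preservation argument ($\|V_1 x\|_2 = \|x\|_2$) used in Lemma~\ref{lem: realization reduction controllability}.
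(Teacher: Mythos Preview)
Your proposal is correct. The paper omits the proof entirely, noting only that it is similar to that of Lemma~\ref{lem: realization reduction controllability}; your second alternative---building an orthonormal basis whose trailing $N-\bar N$ vectors span the $\tilde A$-invariant unobservable subspace $\mathcal N(G_N)$ and reading off the observable part from the resulting block-triangular structure---is exactly that intended ``similar'' argument, and your first (duality) route is a clean shortcut that reaches the same conclusion with no extra construction.
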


The proof is omitted since it is similar to the proof of Lemma \ref{lem: realization reduction controllability}. With the help of Lemmas \ref{lem: realization reduction controllability} and \ref{lem: realization reduction observability}, we develop an explicit criterion on characterizing the irreducible stable realization for the RCN given in \eqref{equ: RCN dynamics rewritten}.

\begin{theorem}\label{thm:irreducible stable realization}
	A stable realization $(\tilde{A}, \tilde{B}, \tilde{C})_N$ is irreducible, i.e., there exists no equivalent stable realization $(\bar{A}, \bar{B}, \bar{C})_{\bar{N}}$ with $\bar{N} < N$, if and only if $\mathrm{rank}\: W_N G_N = N$.
\end{theorem}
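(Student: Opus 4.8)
The plan is to reduce the single rank condition on the product $W_N G_N$ to the two separate full-rank conditions $\mathrm{rank}\, W_N = N$ and $\mathrm{rank}\, G_N = N$, and then to bridge between \emph{stable}-irreducibility and these rank conditions using the stability-preserving reduction Lemmas \ref{lem: realization reduction controllability} and \ref{lem: realization reduction observability} in one direction, and the classical characterization Lemma \ref{lem: irreducible realization without constraint} in the other.

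First I would establish the equivalence $\mathrm{rank}\, W_N G_N = N \iff (\mathrm{rank}\, W_N = N \text{ and } \mathrm{rank}\, G_N = N)$. In the one-dimensional time-series setting the paper adopts, $W_N$ and $G_N$ are both $N \times N$, so this is the elementary fact that a product of two square matrices has full rank iff each factor does (equivalently, the product is invertible iff both factors are), together with the bound $\mathrm{rank}(W_N G_N) \le \min\{\mathrm{rank}\, W_N, \mathrm{rank}\, G_N\} \le N$ forcing both factors to attain rank $N$ whenever the product does. With this in hand, the $(\Leftarrow)$ direction is immediate: if $\mathrm{rank}\, W_N G_N = N$, then both $W_N$ and $G_N$ have full rank, so by Lemma \ref{lem: irreducible realization without constraint} the realization $(\tilde{A}, \tilde{B}, \tilde{C})_N$ is irreducible in the classical (unconstrained) sense. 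Hence no equivalent realization of dimension smaller than $N$ exists \emph{at all}, and in particular no equivalent \emph{stable} one, which is exactly stable-irreducibility.

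The $(\Rightarrow)$ direction carries the real content, and I would argue by contraposition. Suppose the rank condition fails, so by the first step either $\mathrm{rank}\, W_N = \bar N < N$ or $\mathrm{rank}\, G_N = \bar N < N$. In the former case Lemma \ref{lem: realization reduction controllability} produces an equivalent $\alpha$-stable realization of dimension $\bar N < N$; in the latter case Lemma \ref{lem: realization reduction observability} does the same. Either way we have exhibited a smaller equivalent stable realization, contradicting the assumed stable-irreducibility and completing the contrapositive. Note that a single application of one reduction lemma suffices to produce the required smaller stable competitor; I need not iterate to a fully minimal realization.

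The main obstacle, and the reason the theorem is not merely a restatement of Lemma \ref{lem: irreducible realization without constraint}, is that stable-irreducibility is a priori \emph{weaker} than classical minimality, since it only precludes smaller equivalent realizations that happen to be stable. The crux is therefore that the reductions in Lemmas \ref{lem: realization reduction controllability} and \ref{lem: realization reduction observability} \emph{preserve $\alpha$-stability}, guaranteeing $\|\bar A\|_2 < \alpha$ for the reduced matrix, so that the smaller equivalent realization constructed in the contrapositive is admissible as a competitor in the stable class. Without this stability-preserving property the contrapositive would yield only a possibly unstable smaller realization and the equivalence would collapse. I would make sure the write-up flags this reliance explicitly, since it is precisely where the $\alpha$-stable framework earns its keep.
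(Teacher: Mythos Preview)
Your proposal is correct and follows essentially the same route as the paper: both arguments reduce the condition $\mathrm{rank}\,W_NG_N=N$ to the pair of conditions $\mathrm{rank}\,W_N=N$ and $\mathrm{rank}\,G_N=N$, then invoke Lemma~\ref{lem: irreducible realization without constraint} for one direction and the stability-preserving reductions of Lemmas~\ref{lem: realization reduction controllability}--\ref{lem: realization reduction observability} for the other. The only cosmetic difference is that the paper phrases both implications via the contrapositive and handles the rank step with Sylvester's inequality, whereas you argue one direction directly and use the square-matrix invertibility fact (leaning on $p=q=1$) in place of Sylvester; the logical content is identical.
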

\begin{proof}
	We prove the theorem by proving the contraposition, i.e., $(\tilde{A}, \tilde{B}, \tilde{C})_N$ is not irreducible if and only if $\mathrm{rank}\: W_N G_N < N$.

	On the one hand, if $(\tilde{A}, \tilde{B}, \tilde{C})_N$ is not irreducible, then by Lemma \ref{lem: irreducible realization without constraint}, either $\mathrm{rank}\: W_N < N$ or $\mathrm{rank}\: G_N < N$. Therefore, $$\mathrm{rank}\: (W_NG_N) \leq \min (\mathrm{rank}\: (W_N), \mathrm{rank}\: G_N) < N.$$
	On the other hand, if $\mathrm{rank}\: (W_NG_N) < N$, then, by Sylvester's rank inequality, it holds that $$\mathrm{rank}\:(W_N) + \mathrm{rank}\: (G_N) - N \leq \mathrm{rank}\: (W_N G_N) < N,$$ which implies that $\mathrm{rank}\: W_N + \mathrm{rank}\: G_N < 2N$ and that either $\mathrm{rank}\:(W_N) < N$ or $\mathrm{rank}\: (G_N) < N$. Hence, from Lemma \ref{lem: realization reduction controllability} (or Lemma \ref{lem: realization reduction observability}), it holds that $(\tilde{A}, \tilde{B}, \tilde{C})_N$ is not irreducible.
\end{proof}

As a consequence of Lemma \ref{lem: realization reduction controllability}, Lemma \ref{lem: realization reduction observability}, and Theorem \ref{thm:irreducible stable realization}, the procedure for finding {the} irreducible stable realization that is equivalent to a given stable realization $(\tilde{A}, \tilde{B}, \tilde{C})_N$ is described in Algorithm \ref{algo:1}, where $\mathsf{Orth}(A)$ returns an orthonormal basis of $\mathcal{R}(A)$, and $\mathrm{dim}\:\tilde{A}$ returns the dimension of the matrix $\tilde{A}$.

\begin{remark}
	It is worthwhile to mention that as proved in \cite{Gonon_2020}, a linear RCN attains maximal memory capacity when its weight matrices $(A, B)$ constitutes a full-rank controllability matrix. As a consequence, Algorithm \ref{algo:1} not only returns an irreducible linear realization of RCN, but also provides an RCN that reaches maximum memory capacity.
\end{remark}

\begin{algorithm}[h]
	\caption{Minimal stable realization}  \label{algo:1}
	\begin{algorithmic} 
		\Function{Minimal stable realization}{$\tilde{A}, \tilde{B}, \tilde{C}$}
			\State\textbf{Initialize}: Compute $W_N, G_N$ for $(\tilde{A}, \tilde{B}, \tilde{C})_N$
			\While{$\mathrm{rank}\:(W_NG_N) < N$}
				\If {$\mathrm{rank}\: (W_N) < N$}
					\State $V_1 = \mathsf{Orth}\:(W_N)$.
				\Else
					\State $V_1 = \mathsf{Orth}\:(G_N^{\intercal})$.
				\EndIf
				\State $\tilde{A} = V_1^{\intercal}\tilde{A}V_1$, $\tilde{B} = V_1^{\intercal}\tilde{B}$, $\tilde{C} = \tilde{C}V_1$,
				\State $N = \mathrm{dim}\: \tilde{A}$.
				\State Compute $W_N, G_N$ for $(\tilde{A}, \tilde{B}, \tilde{C})_N$.
			\EndWhile
			\State \Return{$(\tilde{A}, \tilde{B}, \tilde{C})_N$.} 
		\EndFunction  
	\end{algorithmic}  
\end{algorithm}

\begin{remark}
	Each iteration in Algorithm \ref{algo:1} consists of computing the eigen-decomposition of the controllability or the observability matrix, which has a time-complexity of $O(N^3)$. In the worst case, Algorithm \ref{algo:1} may take $N$ iterations to terminate, which results in a total time-complexity of $O(N^4)$. Nevertheless, in all our numerical experiments, we observe that the number of iterations for Algorithm \ref{algo:1} to terminate is of order much smaller than $N$, i.e., around $10$ to $20$ iterations for the cases when $N = 500, 1000$ or even $N = 2000$, so that we empirically expect that the average time-complexity of Algorithm \ref{algo:1} is $O(N^3)$. On the other hand, model selection procedures for designing an initial learning model for the given data typically involves evaluating the performance of models with various hyper-parameters and choosing the model that yields the best result \cite{aras2016new, rodan2010minimum}. One of the main features of the proposed approach is that the results of Theorem 2 can be used to evaluate the irreducibility of the RCN and Algorithm 1 can be used to prune the RCN size, irrespective of how the initial RCN model is selected.
\end{remark}
\subsection{Further implications of irreducible stable realizations}
\label{subsec: RCN design using the irreducible linear realization}
The development of irreducible realization in the previous section has a lot to offer for designing linear and nonlinear RCNs, as well as understanding the underlying dynamics in the training dataset. We start by explaining how the size of the irreducible realization is related to Takens embedding, which provides a criterion to characterize the complexity of the underlying dynamics determined by $u[0;T-1]$ and $\tilde{y}[1;T]$.

From the theory of linear dynamical system (see \cite{brockett2015finite}), it is a known fact that for any $N$-dimensional realization $(\tilde{A}, \tilde{B}, \tilde{C})_N$, there exists an invertible matrix $P\in \mathbb{R}^{N\times N}$ such that $(P^{-1}\tilde{A} P, P^{-1}\tilde{B}, \tilde{C}P)_N$ is in the observable canonical form  given by 
\begin{align} 
P^{-1}\tilde{A} P &= \renewcommand{\arraystretch}{1.1}\begin{pmatrix}
 0 		& I_q &   & \\ 
 0 & 0 & \ddots\\
 \vdots & \vdots & \ddots & \ddots& \\
 0	& 0 & \cdots & 0& I_q\\
 -\theta_0 I_q  	& -\theta_1 I_q& \cdots	& \theta_{N-2}I_q & -\theta_{N-1}I_q
\end{pmatrix}, \nonumber\\
P^{-1}\tilde{B} &= \begin{pmatrix}
\gamma_0 \\ \vdots \\ \gamma_{N-1}
\end{pmatrix}, \quad \tilde{C}P = \begin{pmatrix}
I_q, & 0, &\cdots, & 0
\end{pmatrix}, \label{equ: canonical form}
\end{align}
where $\theta_j, j = 0, 1, \ldots , N-1$ are arbitrary constants and $\gamma_j, j = 0, 1, \cdots , N-1$ are the first $N$ Markov parameters of the realization $(\tilde{A}, \tilde{B}, \tilde{C})_N$. It is not hard to verify that for any invertible matrix $P\in \mathbb{R}^{N\times N}$, we have $(P^{-1}\tilde{A} P, P^{-1}\tilde{B}, \tilde{C}P)_N$ to be equivalent to $(\tilde{A}, \tilde{B}, \tilde{C})_N$. Therefore, without loss of generality, we assume that $(\tilde{A}, \tilde{B}, \tilde{C})_N$ is in the observable canonical form as in \eqref{equ: canonical form}. In this case, the dynamics of each component of $x[k]$  is given by 
\begin{align*}
	x[k+1] &=\begin{pmatrix}
	x_2[k] + \gamma_0u[k]\\
	\vdots\\
	x_N[k] + \gamma_{N-2}u[k]\\
	\sum_{i=0}^{N-1} \theta_i x_i[k] + \gamma_{N-1}u[k]
	\end{pmatrix}, \quad\tilde{y}[k] = x_1[k],
\end{align*}
where $x_i[k]$ is the $i^{\text{th}}$ component of $x[k]$. Therefore, we have
\begin{align}
	&\tilde{y}[k+N] = x_1[k+N] = x_2[k+N-1] + \gamma_0u[k+N-1] \nonumber\\
	&= \cdots =x_N[k+1] + \sum_{i=0}^{N-1}\gamma_i u[k+N-1-i]. \label{equ: output time-delay embedding}
\end{align}
When the RCN possesses the ESP, the effect of $x_N[k+1]$ on $\tilde{y}[k+N]$ is negligible, so that from \eqref{equ: output time-delay embedding}, $\tilde{y}[k+N]$ is determined by $u[k], \ldots, u[k+N-1]$, which is the input data history of size $N$. From Takens theorem, the time evolution on a compact manifold of dimension $m$ can be represented by any time-delay embedding longer than $2m+1$. Therefore, if a linear realization describes the underlying dynamics determined by $u[0;T-1]$ and $\tilde{y}[1;T]$ perfectly, the dimension of such realization satisfies $N >  2m+1$ by Takens theorem. On the other hand, if a linear realization of size $N$ attains a training error of $\epsilon$, then it implies that there exists a dynamical system evolving on a manifold of dimension $\frac{1}{2}(N-1)$ that approximately represents the underlying dynamics up to $\epsilon$-error. This analysis provides a bound on the size of an RCN representing the dynamics of the underlying dynamical system generating the given input-output data sequences.

In addition, it is worth mentioning that since we are using linear dynamics to approximate the map $\psi_{2m+1}$ in Takens embedding, the bound on $m$ mentioned above can be improved through the use of a nonlinear realization. It is intuitive to argue that using a nonlinear realization (e.g., with $\tanh$ or $\mathrm{sigmoid}$ as the activation function) to design an RCN may result in better approximation compared to a linear realization of the same dimension. However, the explicit solution of RCN dynamics with a nonlinear activation function is in general unavailable. In this case, the linear realization of the RCN offers a guideline towards designing a nonlinear RCN realization. A reasonable design approach for the RCN with a nonlinear activation function is to first design a linear realization, say $(\tilde{A}, \tilde{B}, \tilde{C})_N$, for the given input-output data sequences; then construct an RCN with nonlinear activation function using the matrices $A$, and $B$ as in \eqref{equ: state of RCN} through 
\begin{align*}
	A = \frac{1}{\alpha} (\tilde{A} - (1-\alpha) I), \quad B = \frac{1}{\alpha} \tilde{B},
\end{align*}
and train the readout layer again.

\section{Numerical experiments} \label{sec: numerical results}
In this section, we present several numerical examples to illustrate the developed tractable realization-theoretic approach with training error guarantees, for which the irreducible size of RCNs with respect to desired training errors can be explicitly quantified. Based on linear stable realizations, we further design nonlinear RCNs with the canonical $\tanh$ or $\mathrm{sigmoid}$ activation functions that achieve desired training error performance. 

\subsection{The irreducible realization of time-delay systems}\label{subsec: time-delay system}
In this example, we design an RCN using the proposed approach for forecasting a time-delay system to elucidate the intimate connection between the irreducible linear realization and Takens embedding.

We first introduce how the training data was generated and how the RCN was trained. For a fixed time-delay $\tau$, we randomly picked $u[0], \ldots, u[\tau-1]$ under a uniform distribution on $[-1, 1]$. Then, we completed the sequence of $u$ by setting $u[k+\tau] = u[k]$ for $k = 0, \ldots, T-1$. In this way, the dynamical system governing the sequence $u[0;T]$ is a $\tau$-step time-delay system. After generating the sequence $u[0;T]$, we used $u[0; T-\tau]$ as the input and $u[\tau, T]$ as the reference output to train an RCN with linear activation function.

In this experiment, we varied $\tau$ from $1$ to $50$. For each $\tau$, we randomly generated a $N_0$-dimensional $0.001$-error realization with $N_0>\tau$ and computed its irreducible realization using Algorithm 1. Then, using the irreducible realization, we forecast the sequence $\tau$-step ahead for {$2000$} time steps. When training RCNs, we fixed training length as $T = 1000$, and leakage rate as $\alpha = 0.9$.

Figure \ref{fig: Nmin_moving_average} demonstrates the size of irreducible linear realization $N_{\mathrm{min}}$ with respect to $\tau$ with $N_0$ selected as $N_0 = 4\tau$. Each point in the figure is the average plus/minus the standard deviation of $10$ independent experiments under the same setup. Figure \ref{fig: Nmin_moving_average} shows that using Algorithm \ref{algo:1}, we can trim a large RCN (dashed green line) into much smaller size (solid blue line) with the same performance. This result is consistent with our analysis in Section \ref{subsec: RCN design using the irreducible linear realization} that the minimum size of an RCN realization can be used as a criterion to quantify the length of time-delay embedding (dashed red line) associated with the training dataset. Figure \ref{fig: Nmin_moving_average_ts_error} provides the averaged mean squared errors (MSE) of the irreducible linear realization with respect to $\tau$.
\begin{figure}[htb]
  \centering
  \adjustbox{minipage=1em,valign=t}{\subcaption{}\label{fig: Nmin_moving_average}}%
  \begin{subfigure}[b]{0.42\linewidth}
    \centering\includegraphics[width=\linewidth]{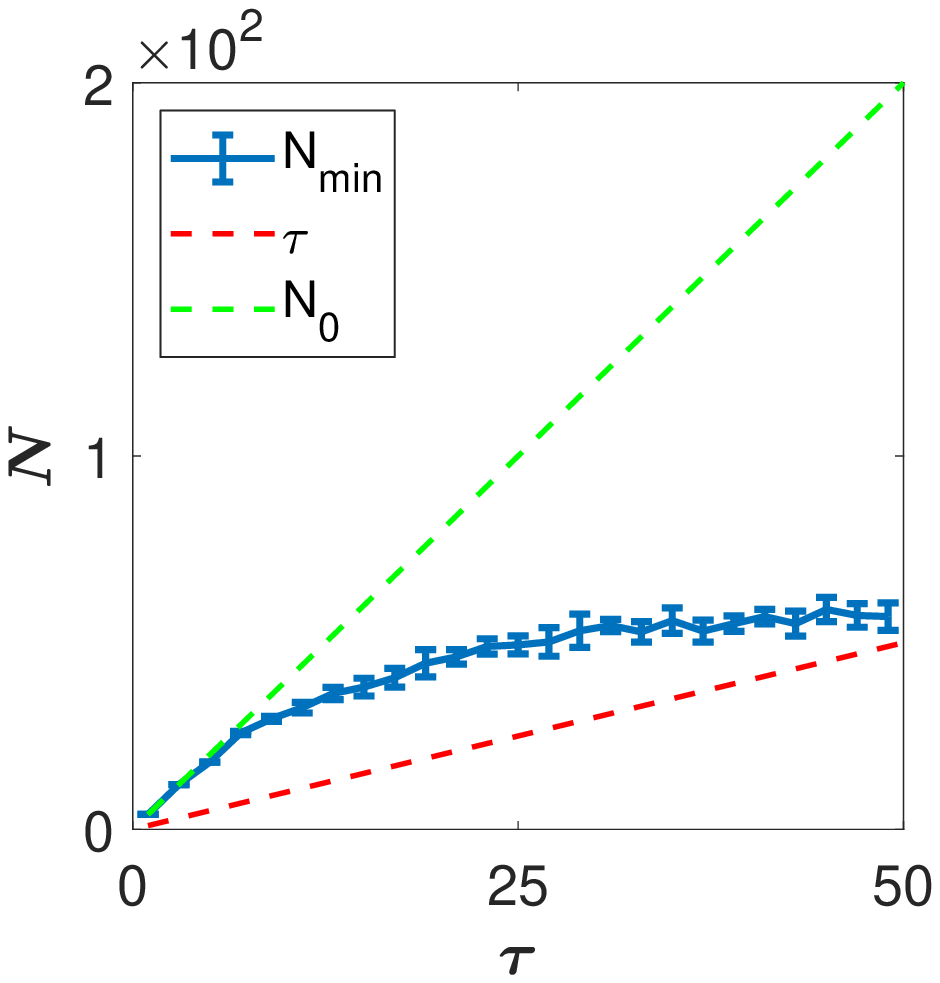} 
  \end{subfigure}
  \hspace{10pt}
  \adjustbox{minipage=1em,valign=t}{\subcaption{}\label{fig: Nmin_moving_average_ts_error}}%
  \begin{subfigure}[b]{0.42\linewidth}
    \centering\includegraphics[width=\linewidth]{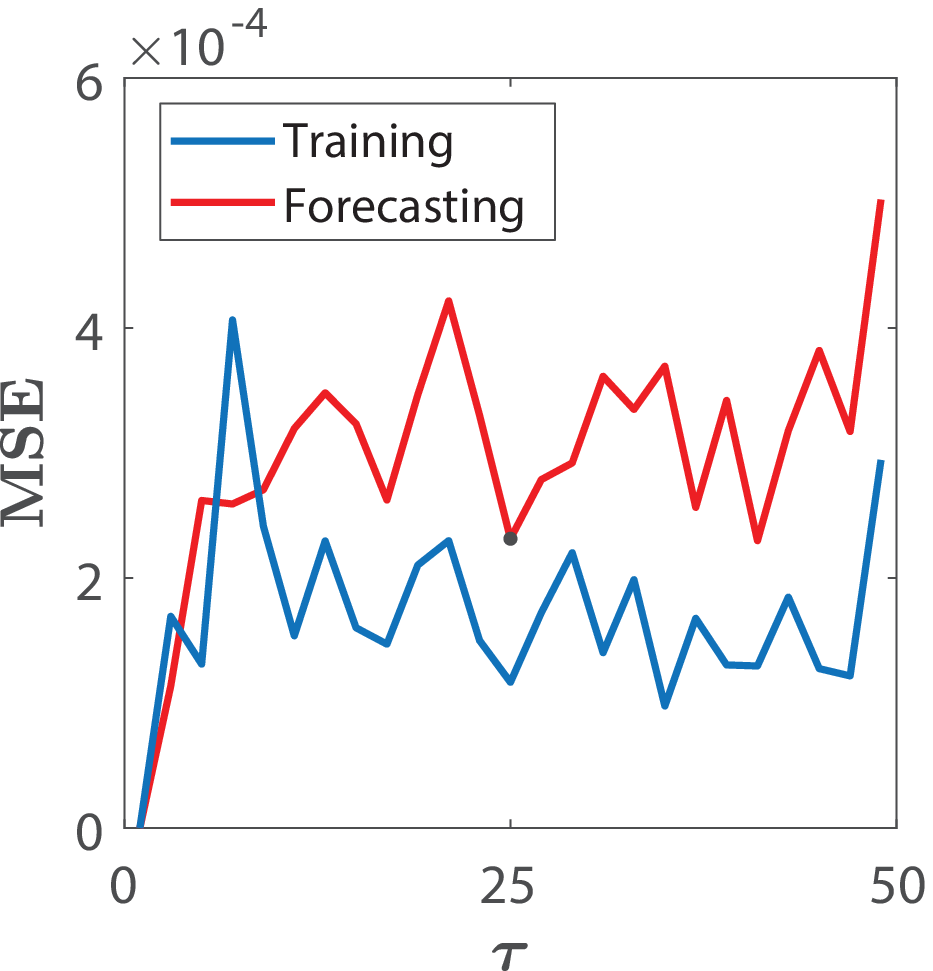}
  \end{subfigure}
  \caption{Results of designing RCNs to forecast a time-delay system. (a) The size of irreducible $0.001$-error realization versus length of time-delay. (b) The training and forecasting MSE of the irreducible $0.001$-error realization versus length of time-delay.}
\end{figure}
\subsection{Time-evolution forecast for chaotic systems}
\label{subsec: Rossler}
\begin{figure*}[htb]
  \centering
  \adjustbox{minipage=1.3em,valign=t}{\subcaption{}\label{fig: k_step_Rossler}}%
  \begin{subfigure}[b]{0.65\linewidth}
    \centering\includegraphics[width=1\linewidth]{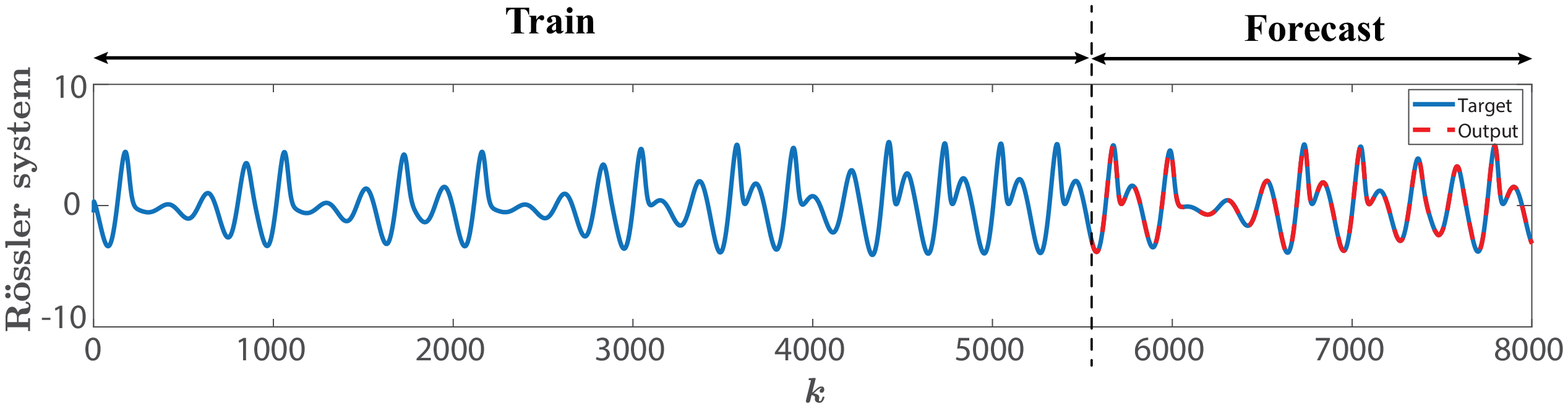} 
  \end{subfigure}
  
  \hspace{0pt}
  \adjustbox{minipage=1em,valign=t}{\subcaption{}\label{fig: k_step_tanh}}%
  \begin{subfigure}[b]{0.2\linewidth}
    \centering\includegraphics[width=\linewidth]{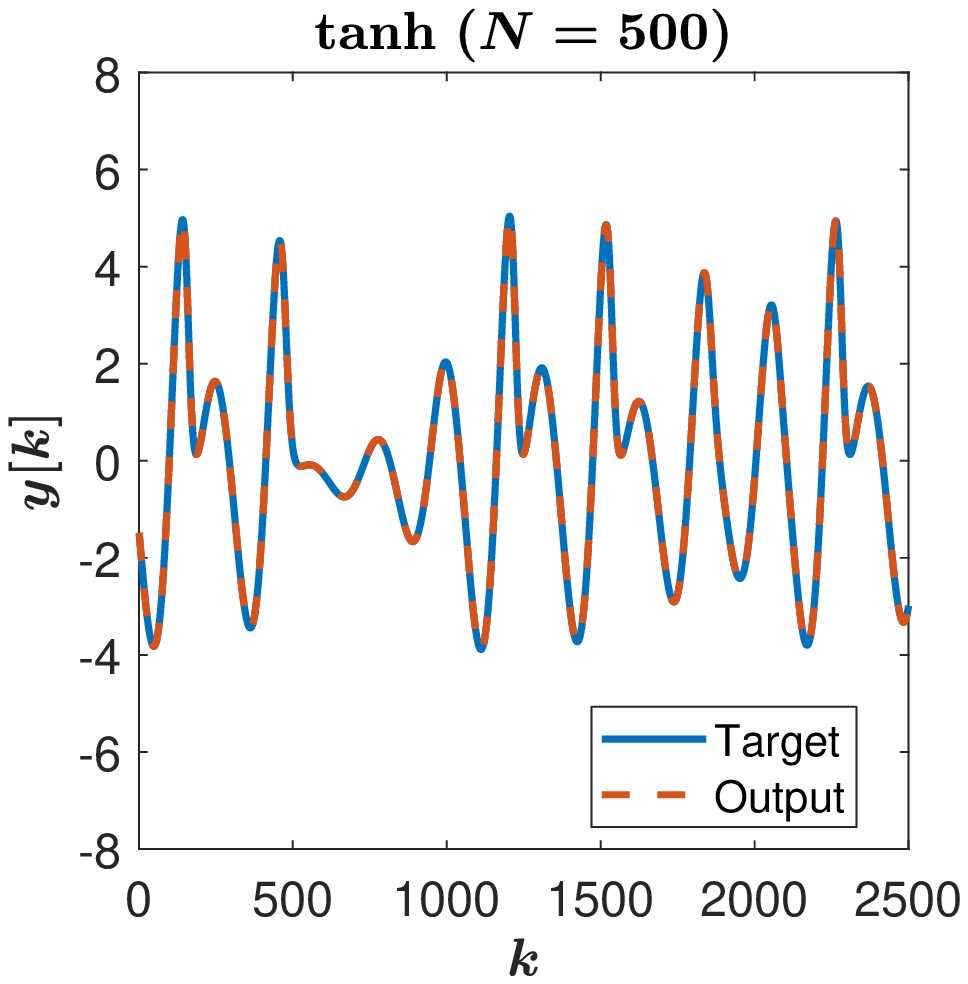} 
  \end{subfigure}
  \hspace{10pt}
  \adjustbox{minipage=1em,valign=t}{\subcaption{}\label{fig: k_step_sweep_alpha}}%
  \begin{subfigure}[b]{0.2\linewidth}
    \centering\includegraphics[width=\linewidth]{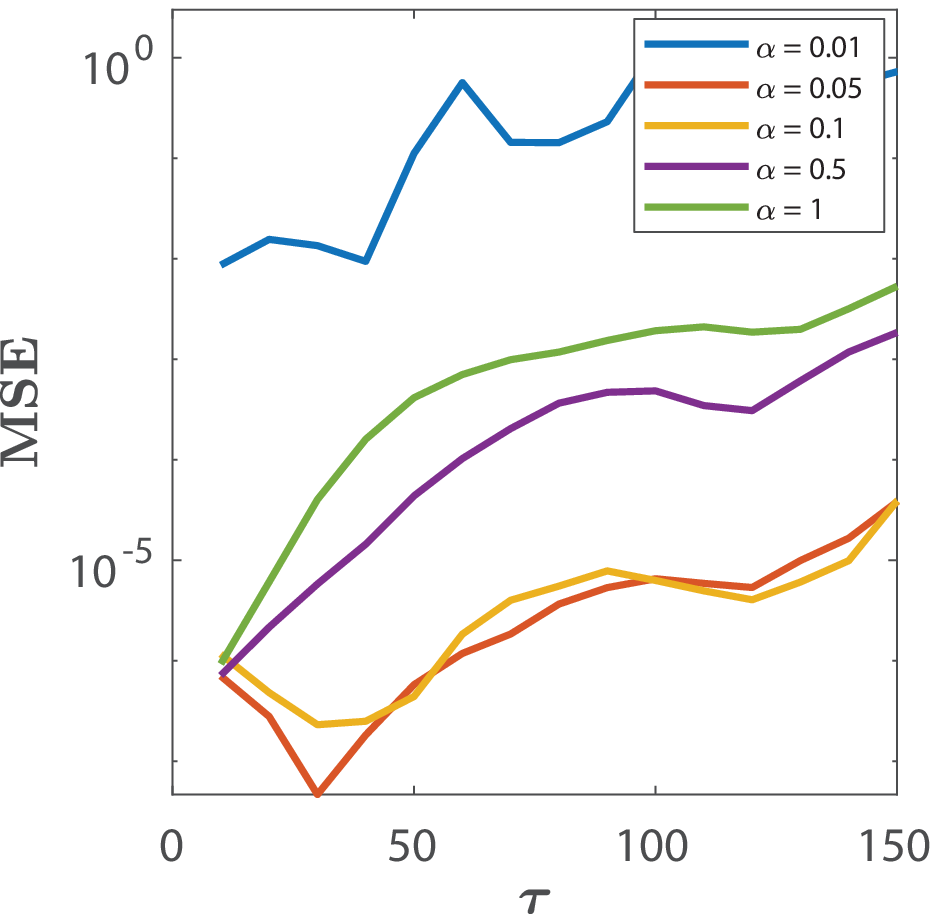} 
  \end{subfigure}
  \hspace{10pt}
  \adjustbox{minipage=1em,valign=t}{\subcaption{}\label{fig: k_step_logMSE}}%
  \begin{subfigure}[b]{0.2\linewidth}
    \centering\includegraphics[width=\linewidth]{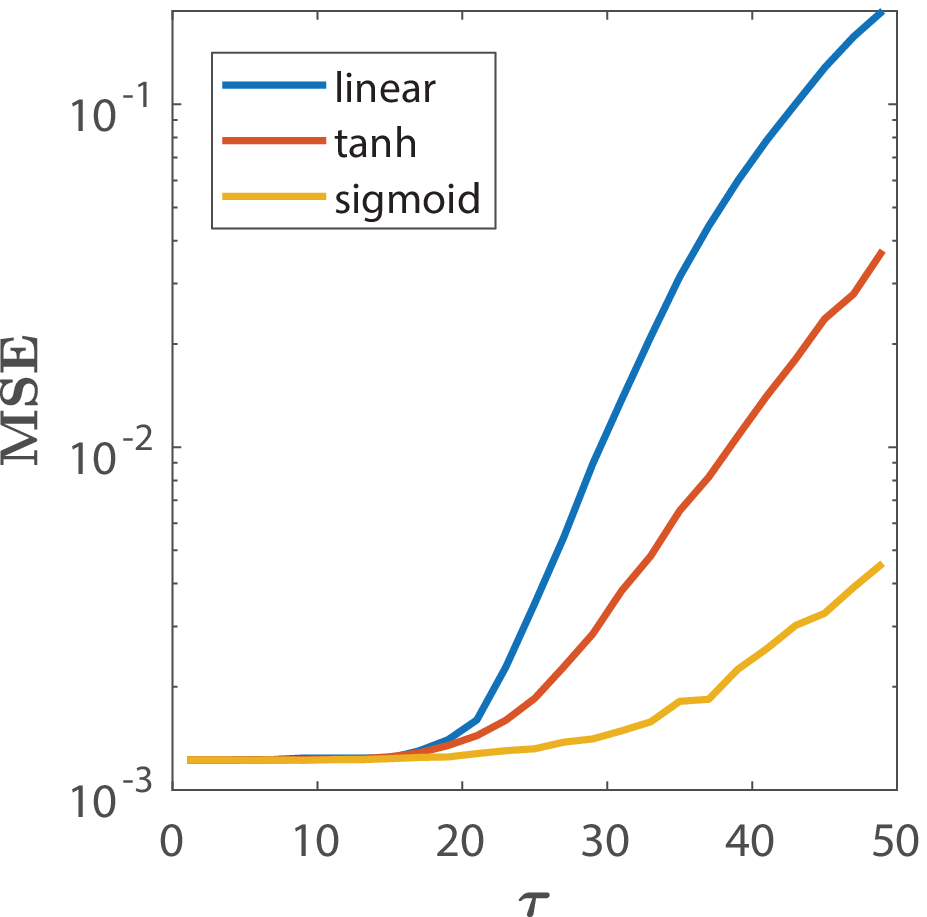} 
  \end{subfigure}

  \hspace{10pt}
  \adjustbox{minipage=1em,valign=t}{\subcaption{}\label{fig: k_step_linear_N=500}}%
  \begin{subfigure}[b]{0.2\linewidth}
    \centering\includegraphics[width=\linewidth]{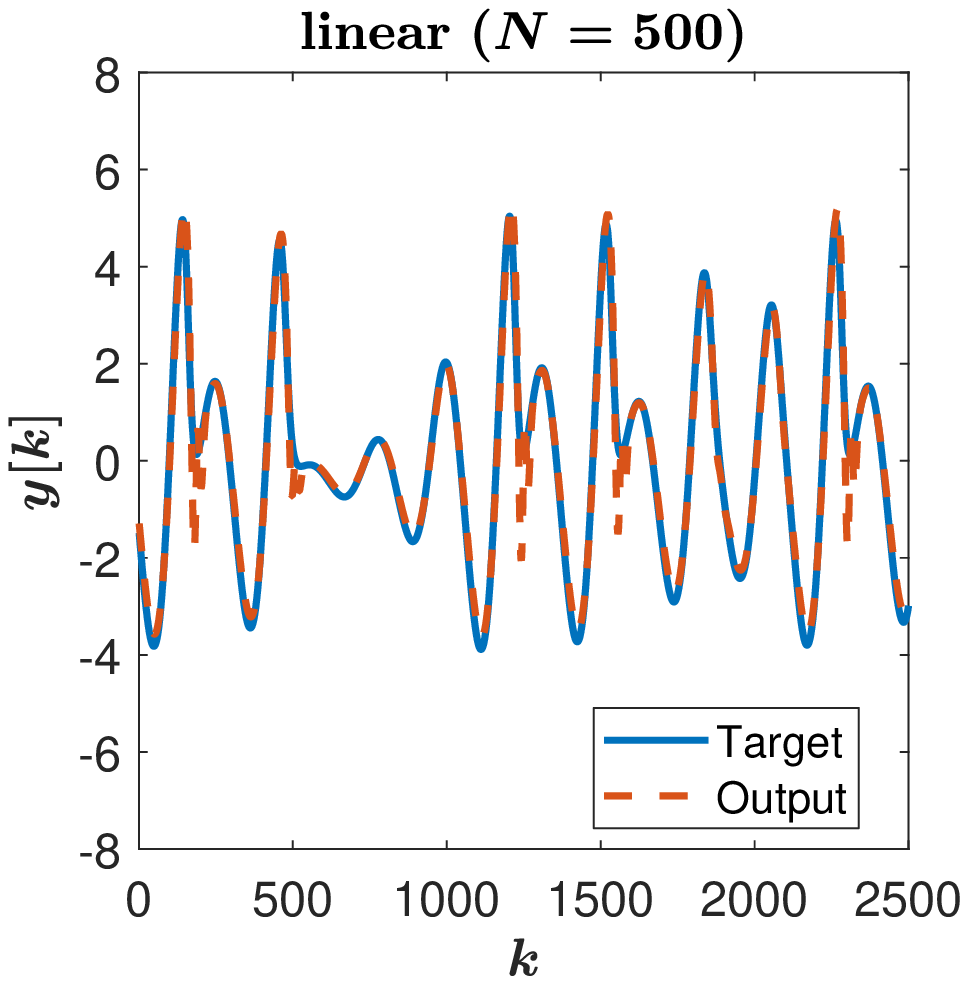} 
  \end{subfigure}
  \hspace{10pt}
  \adjustbox{minipage=1em,valign=t}{\subcaption{}\label{fig: k_step_linear_N=31}}%
  \begin{subfigure}[b]{0.2\linewidth}
    \centering\includegraphics[width=\linewidth]{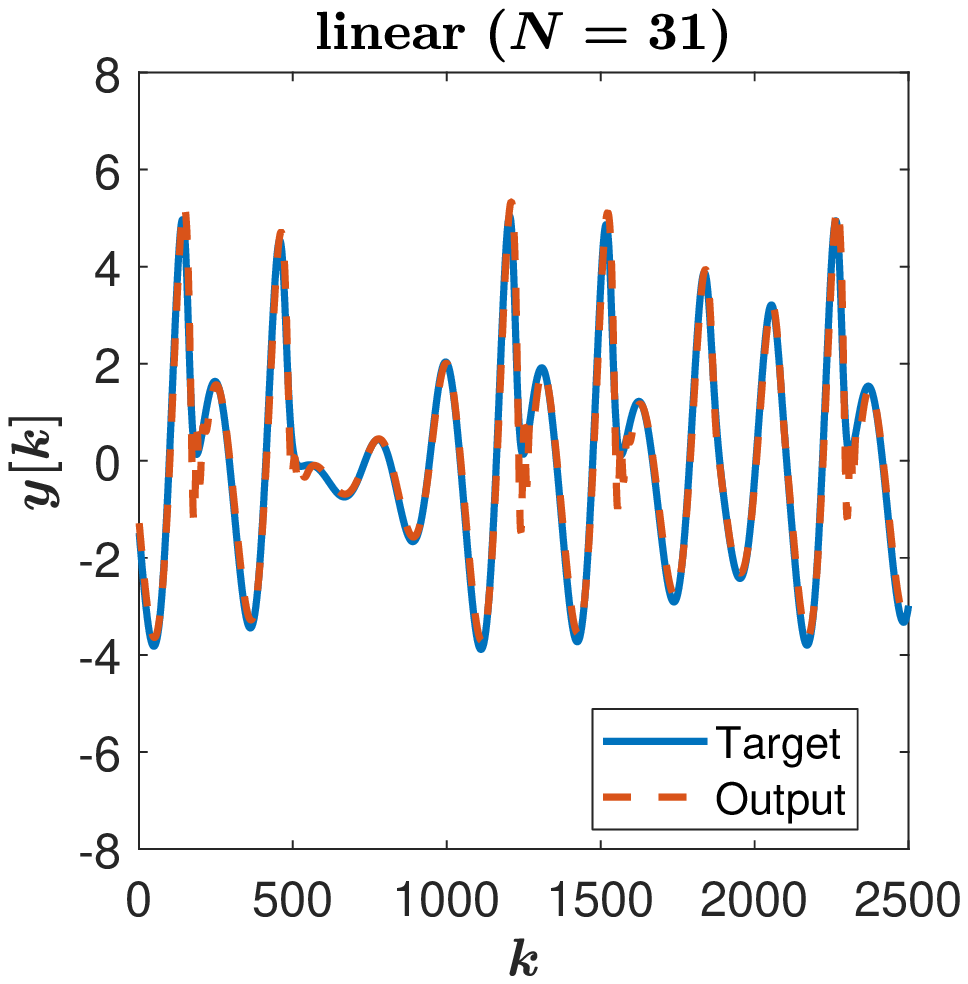} 
  \end{subfigure}

  \caption{ (a) Experiment setup for using RCN to predict the time-series generated by a R\"{o}ssler system for $30$-steps ahead. (b) Forecasting the same time-series as (a), using tangent hyperbolic function. (c) The mean squared error (MSE) between the forecast output and target, versus different time-delays, using the same RCN as in (b) with different leakage rate. (d) The mean squared error (MSE) between the forecast output and target, versus different time-delays, using the same RCN as in (b) under various activation functions. (e) Forecasting the same times-series as (a), but changing the activation function into linear function. (f) Forecasting the same time-series as (a) using the reduced RCN computed from (e) through Algorithm \ref{algo:1}.
  }
\end{figure*}

In this example, we show the use of an RCN to learn the temporal evolution of a chaotic system, and analyze how the network configuration affects the performance of the designed RCN. Specifically, we consider the R\"{o}ssler system, given by
\begin{align*}
  dx/dt = -y - z, \quad dy/dt = x+ay, \quad  dz/dt = b + z(x-c),
\end{align*}
where $x,y, z$ are state variables and $a, b,c$ are constant parameters selected as $a = 0.5$, $b = 2.0$, and $c = 4.0$. Initial conditions were selected as $x(0) = 0$, $y(0)=0$, and $z(0) = 1$.

The R\"{o}ssler  system was simulated for $t \in [0,1]$, with $8000$ sampling points collected in this time window. The first $5000$ points were used as the training input, and the sample points between $30$ and $5030$ were used as the reference sequence to train the RCN for $30$-step ahead forecasting. Then, we recorded the output of the RCN for another $3000$ steps as a forecasting sequence. Figure \ref{fig: k_step_Rossler} provides a demonstration of forecasting the $x$-component of R\"{o}ssler system for $30$-steps ahead.

We first randomly generate two RCNs with all hyper-parameters to be the same, but one with $\tanh$ activation function, the other one with linear activation function. The hyper-parameters of the RCN were fixed as follows: number of nodes $N = 500$, leakage rate $\alpha  = 0.8$, length of training data $t_r = 5000$, length of forecasting data $t_s = 3000$, and washout length $w = 500$. If not specifically mentioned, the matrix $A$ was generated under a uniform distribution on $[0,1]$ of sparsity $63.2\%$, and then normalized to have a matrix $2$-norm equals to $0.9$. The output-layer was trained via ridge regression \cite{hoerl1970ridge} with $\lambda = 10^{-8}$.

Figures \ref{fig: k_step_tanh} presents the results of forecasting the time-series provided in Figure \ref{fig: k_step_Rossler} with a mean-squared-error (MSE) of $1.49\times 10^{-3}$. Under the same setup as in Figure \ref{fig: k_step_tanh}, we varied the leakage rate $\alpha$ and the activation function of the RCN, and present the corresponding results of MSE versus time-delay ($\tau$) in Figures \ref{fig: k_step_sweep_alpha} and \ref{fig: k_step_logMSE}, respectively. From Figure \ref{fig: k_step_sweep_alpha}, we observed that $\alpha = 0.05$ resulted in the best MSE across different cases (of $\tau$); and from Figure 2d, we observed that the RCN with linear activation function achieved similar performance as the RCNs with a nonlinear activation function when the time-delay $\tau$ was small. Therefore, we used the same RCN as in Figure \ref{fig: k_step_tanh}, but changed the leakage rate to $\alpha = 0.05$ and the activation function into linear function to forecast the same time-series. The corresponding results are presented in Figure \ref{fig: k_step_linear_N=500} with an MSE of $2.97 \times 10^{-2}$.

We further applied Algorithm \ref{algo:1} on the above RCN with linear activation function, yielding a irreducible linear realization of size $31$ with a MSE of $2.54\times 10^{-2}$. Figure \ref{fig: k_step_linear_N=31} presents the result of using the irreducible RCN to forecast the same time-series generated by R\"{o}ssler system for $30$-steps ahead. Based on our empirical analysis in Section \ref{subsec: RCN design using the irreducible linear realization}, the results in Figure \ref{fig: k_step_linear_N=31} implies that the underlying dynamics of the given input-output sequences can be well-approximated by a linear dynamics with a time-delay of no longer than $31$ steps, which is evident by the experiment setups.

\subsection{Analysis of linear and nonlinear activation functions}

In this part, we further investigate the difference in performance between linear and nonlinear RCNs. The results in this section support our idea of using the linear realization of the RCN to help design a nonlinear RCN with a guaranteed training error, as mentioned in Section \ref{subsec: RCN design using the irreducible linear realization}.

Table \ref{tab:time delay} presents the results of forecasting time-delay systems using linear and nonlinear RCNs. We varied the size of the RCN from $50$ to $500$. For each $N$, we generated a time-delay system with $N$ delay steps (as in Section \ref{subsec: time-delay system}) and simulated $2000$ independent experiments of forecasting $(N-1)$- steps ahead. The training length was fixed as $t_r = 5000$ and the forecast length was fixed as $t_s = 2000$. To make a fair comparison, in each experiment, we generated three RCNs of size $N$, one with linear activation function, one with $\tanh$ activation function, and another with $\mathrm{sigmoid}$ activation function, using the same randomly generated matrices $\tilde{A}$ and $\tilde{B}$. Other hyper-parameters of the three RCNs were set to be the same as in Section \ref{subsec: time-delay system}. The average and standard deviation of training and forecast MSE are reported in Table \ref{tab:time delay}. As we observe from the last column in Table \ref{tab:time delay}, the nonlinear RCNs outperform the linear RCN in terms of the training MSE in most cases.

Table \ref{tab:Rossler} reports the results of forecasting the R\"{o}ssler system using RCNs with different activation function. Using the same dataset as the one in \ref{subsec: Rossler}, we compared the performance of linear and nonlinear RCN at different scales. Similar to the previous table, we varied the size of the RCN from $50$ to $500$ and conducted $2000$ independent experiments to forecast the R\"{o}ssler system for $10$-steps ahead. The training length was fixed as $t_r = 5000$ and the forecast length was fixed as $t_s = 2000$. In each experiment, we generated three RCNs of size $N$, one with linear activation function, one with $\tanh$ activation function, and another with $\mathrm{sigmoid}$ activation function, using the same randomly generated matrices $\tilde{A}$ and $\tilde{B}$. Other hyper-parameters of the three RCNs were set to be the same as in Section \ref{subsec: Rossler}. The average and standard deviation of training and forecast MSE are reported in Table \ref{tab:Rossler}. As we observe from the last column in Table \ref{tab:Rossler}, in this experiment, the training MSE of nonlinear RCNs was always smaller than that of the linear RCNs.

\begin{table*}
\centering
\begin{tabular}{|C{0.5cm}|C{1.5cm}|C{1.5cm}|C{1.5cm}|C{1.5cm}|C{1.5cm}|C{1.5cm}|C{3.5cm}|}
\hline
\multirow{2}{*}{N} & \multicolumn{3}{c}{Training MSE (mean $\pm$ std)} & \multicolumn{3}{|c|}{Forecasting MSE (mean $\pm$ std)} & \multirow{2}{2.2cm}{$\mathbb{P}(\epsilon_{\text{linear}}$ $> \max(\epsilon_{\tanh}, \epsilon_{\mathrm{sigmoid}})$)}\\
\cline{2-7}
& \text{linear} ($\times 10^{-5})$ & $\tanh$ ($\times 10^{-5}$)& $\mathrm{sigmoid}$ ($\times 10^{-7}$) & \text{linear} ($\times 10^{-5})$ & $\tanh$ ($\times 10^{-5}$)& $\mathrm{sigmoid}$ ($\times 10^{-7}$)&\\
\hline
$50$ & $0.11\pm 0.00$ & $0.10\pm 0.00$ & $1.55\pm 0.50$ & $0.28 \pm 0.00$ & $0.24 \pm 0.02$ & $2.53\pm 0.69$ & $99.85\%$  \\
\cline{1-8}
$100$ & $0.04 \pm 0.00$ & $0.04 \pm 0.00$ & $0.53\pm 0.13$ & $0.11 \pm 0.00$ & $0.11 \pm 0.00$ & $1.06\pm 0.25$ & $100\%$\\
\cline{1-8}
$150$ & $3.78 \pm 0.00$ & $3.62 \pm 0.04$ & $0.32\pm 0.06$ & $1.83 \pm 0.00$ & $1.67 \pm 0.03$ & $0.65\pm 0.14$ & $100\%$\\
\cline{1-8}
$200$ & $4.63 \pm 0.00$ & $4.52\pm 0.02$ & $0.24\pm 0.04$ & $11.57 \pm 0.00$ & $11.07 \pm 0.00$ & $0.47\pm 0.08$ & $100\%$\\
\cline{1-8}
$250$ & $0.89 \pm 0.00$ & $0.86 \pm 0.00$ & $0.20\pm 0.03$ & $2.23 \pm 0.00$ & $2.07 \pm 0.02$ & $0.38\pm 0.06$ & $100\%$\\
\cline{1-8}
$300$ & $2.00 \pm 0.00$ & $1.97 \pm 0.00$ & $0.18\pm 0.02$ & $2.38 \pm 0.00$ & $2.37 \pm 0.00$ & $0.32\pm 0.04$ & $100\%$\\
\cline{1-8}
$350$ & $0.02 \pm 0.00$ & $0.02 \pm 0.00$ & $0.16\pm 0.02$ & $0.53 \pm 0.00$ & $0.52 \pm 0.00$ & $0.27\pm 0.03$ & $99.95\%$\\
\cline{1-8}
$400$ & $2.12 \pm 0.00$ & $2.11 \pm 0.00$ & $0.15\pm 0.01$ & $1.40 \pm 0.00$ & $1.37 \pm 0.00$ & $0.25\pm 0.03$ & $99.85\%$\\
\cline{1-8}
$450$ & $4.60 \pm 0.00$ & $4.59 \pm 0.00$ & $0.14\pm 0.01$ & $9.83\pm 0.00$ & $9.41 \pm 0.00$ & $0.23\pm 0.02$ & $100\%$\\
\cline{1-8}
$500$ & $4.45 \pm 0.00$ & $4.36 \pm 0.00$ & $0.13\pm 0.01$ & $11.11 \pm 0.00$ & $10.68 \pm 0.03$ & $0.21\pm 0.02$ & $100\%$ \\
\hline
\end{tabular}
\caption{Results of forecasting time-delay system using linear and nonlinear RCNs. In each experiment, we construct a linear RCN and two nonlinear RCNs (with $\tanh$ and $\mathrm{sigmoid}$ activation functions) of size $N$ using the same hyper-parameters and the same randomly generated $\tilde{A}$ and $\tilde{B}$. The error of using RCNs of size $N$ to forecast a $N$-steps time-delay system is provided in the table, where $2000$ independent experiments are conducted for each $N$.}
    \label{tab:time delay}
\end{table*}

\begin{table*}
\centering
\begin{tabular}{|C{0.5cm}|C{1.5cm}|C{1.5cm}|C{1.5cm}|C{1.5cm}|C{1.5cm}|C{1.5cm}|C{3.5cm}|}
\hline
\multirow{2}{*}{N} & \multicolumn{3}{c}{Training MSE (mean $\pm$ std)} & \multicolumn{3}{|c|}{Forecasting MSE (mean $\pm$ std)} & \multirow{2}{2.2cm}{$\mathbb{P}(\epsilon_{\text{linear}}$ $> \max(\epsilon_{\tanh}, \epsilon_{\mathrm{sigmoid}})$)}\\
\cline{2-7}
& \text{linear} ($\times 10^{-3})$ & $\tanh$ ($\times 10^{-5}$)& $\mathrm{sigmoid}$ ($\times 10^{-5}$) & \text{linear} ($\times 10^{-3})$ & $\tanh$ ($\times 10^{-5}$)& $\mathrm{sigmoid}$ ($\times 10^{-5}$)&\\
\hline
$50$ & $2.96\pm 0.08$ & $8.12\pm 2.16$ & $5.09\pm 0.90 $ & $7.33\pm 0.06$ & $21.6\pm 8.18$ & $5.68\pm 1.18 $ & \multirow{10}{*}{$100\%$}\\
\cline{1-7}
$100$ & $2.88\pm 0.06$ & $3.27\pm 0.46$ & $3.72\pm 0.29$ & $7.39\pm 0.05$ & $4.94\pm 1.61$ & $3.81\pm 0.33 $ &\\
\cline{1-7}
$150$ & $2.85\pm 0.05$ & $2.33\pm 0.29$ & $3.36\pm 0.20 $ & $7.42\pm 0.04$ & $2.83\pm 0.63$ & $0.34\pm 0.23 $ & \\
\cline{1-7}
$200$ & $2.83\pm 0.04$ & $1.86\pm 0.24$ & $3.15\pm 0.15 $ & $7.44\pm 0.03$ & $2.17\pm 0.35$ & $0.32\pm 0.18 $ & \\
\cline{1-7}
$250$ & $2.81\pm 0.04$ & $1.55\pm 0.19$ & $2.99\pm 0.14 $ & $7.45\pm 0.03$ & $1.83\pm 0.24$ & $3.07\pm 0.16 $ & \\
\cline{1-7}
$300$ & $2.80\pm 0.03$ & $1.34\pm 0.17$ & $2.86\pm 0.12 $ & $7.46 \pm 0.03$ & $1.64\pm 0.19$ & $2.93\pm 0.14 $ & \\
\cline{1-7}
$350$ & $2.79\pm 0.03$ & $1.18\pm 0.15$ & $2.75\pm 0.10 $ & $7.47\pm 0.03$ & $1.49\pm 0.15$ & $2.82\pm 0.12 $ & \\
\cline{1-7}
$400$ & $2.78\pm 0.03$ & $1.05\pm 0.14$ & $2.65\pm 0.09 $ & $7.48\pm 0.02$ & $1.39\pm 0.13$ & $2.71\pm 0.12 $ & \\
\cline{1-7}
$450$ & $2.77\pm 0.03$ & $0.96\pm 0.13$ & $2.57\pm 0.09 $ & $7.49\pm 0.02$ & $1.31\pm 0.12$ & $2.62\pm 0.11 $ & \\
\cline{1-7}
$500$ & $2.76\pm 0.02$ & $0.88\pm 0.12$ & $2.48\pm 0.09 $ & $7.49\pm 0.02$ & $1.23\pm 0.10$ & $2.53\pm 0.10 $ & \\
\hline
\end{tabular}
\caption{Results of forecasting R\"{o}ssler system using linear and nonlinear RCNs. In each experiment, we construct a linear RCN and two nonlinear RCNs (with $\tanh$ and $\mathrm{sigmoid}$ activation functions) of size $N$ using the same hyper-parameters and the same randomly generated $\tilde{A}$ and $\tilde{B}$. The error of using RCN of different sizes to forecast the R\"{o}ssler system in Section \ref{subsec: Rossler} for $10$-steps ahead is provided in the table, where $2000$ independent experiments are conducted for each $N$.}
    \label{tab:Rossler}
\end{table*}

\section{Conclusions}\label{sec: conclusion}
In this paper, we provided a detailed analysis and a holistic description of the training procedure and the operation of the RCN. With the help of Takens embedding theorem, we derived the delay embedding map, which an RCN potentially learns during the training process from the given input-output data. This provided insights into the role that the linear activation function and other hyper-parameters play in the design and working of RCNs in applications such as forecasting a time-series. Furthermore, based on the notions of  linear realization theory, we provided a systematic approach to trim RCNs with guaranteed training accuracy. In this context, we introduced the idea of $\alpha$-stable realizations for designing stable RCNs that achieve the desired training objective with reduced size, and established a tractable design algorithm to synthesize RCNs with nonlinear activation. The numerical experiments on forecasting time-delay systems and the R\"{o}ssler system were used to substantiate our proposed design approach for interpretable RCNs. We observed from the experiments that the nonlinear RCNs with both reduced size and guaranteed training accuracy can be attained based on the minimum realization for linear RCNs. These results suggested that the proposed approach offers an informed and interpretable design methodology to devise nonlinear RCNs for a given dataset to decode the underlying dynamics in the data.
\bibliographystyle{IEEEtran}
\bibliography{reference_list.bib}
\end{document}